\let\oldnl\nl
\newcommand{\nonl}{\renewcommand{\nl}{\let\nl\oldnl}}
\newtheorem{proposition}{Proposition}
\definecolor{LQY_color}{RGB}{50, 205, 50}
\newcommand{\expect}[0]{\mathop{\mathbb E}}
\title{Learning to Simulate Self-Driven Particles System with Coordinated Policy Optimization}
\author{%
Zhenghao Peng\textsuperscript{$\star$}, 
Quanyi Li\textsuperscript{$\mathsection$}, Ka Ming Hui\textsuperscript{$\star$},
Chunxiao Liu\textsuperscript{$\dagger$}, 
Bolei Zhou\textsuperscript{$\star$} \\ %
\textsuperscript{$\star$}The Chinese University of Hong Kong,
\textsuperscript{$\dagger$}SenseTime Research \\
\textsuperscript{$\mathsection$}Centre for Perceptual and Interactive Intelligence
}
\begin{document}
\maketitle

\begin{abstract}

Self-Driven Particles (SDP) describe a category of multi-agent systems common in everyday life, such as flocking birds and traffic flows. In a SDP system, each agent pursues its own goal and constantly changes its cooperative or competitive behaviors with its nearby agents. Manually designing the controllers for such SDP system is time-consuming, while the resulting emergent behaviors are often not realistic nor generalizable. Thus the realistic simulation of SDP systems remains challenging. 
Reinforcement learning provides an appealing alternative for automating the development of the controller for SDP. However, previous multi-agent reinforcement learning (MARL) methods define the agents to be teammates or enemies before hand, which fail to capture the essence of SDP where the role of each agent varies to be cooperative or competitive even within one episode. To simulate SDP with MARL, a key challenge is to coordinate agents' behaviors while still maximizing individual objectives. Taking traffic simulation as the testing bed, in this work we develop a novel MARL method called Coordinated Policy Optimization (CoPO), which incorporates social psychology principle to learn neural controller for SDP. Experiments show that the proposed method can achieve superior performance compared to MARL baselines in various metrics. Noticeably the trained vehicles exhibit complex and diverse social behaviors that improve performance and safety of the population as a whole.
Demo video and source code are available at:
\url{https://decisionforce.github.io/CoPO/}.
\end{abstract}

\section{Introduction}

Self-Driven Particles (SDP) describe a wide range of multi-agent systems (MAS) in nature and human society. In SDP, individual agent pursues its own goal and interacts with each other following simple local alignment, and then the population exhibits complex collective behaviors~\cite{vicsek1995novel}. We commonly see the phenomena of collective behaviors, such as the flocking birds~\cite{bhattacharya2010collective}, molecular motors~\cite{chowdhury2006collective}, human crowd~\cite{helbing1997modelling,helbing2000simulating}, and the traffic system~\cite{kanagaraj2018self}.
To understand and simulate such phenomena, researchers have developed a number of SDP models. For example,
simple-rule based models~\cite{czirok2000collective} or Hydrodynamic equations based models~\cite{bertin2009hydrodynamic,ihle2011kinetic} can simulate the SDP very well in an unconstrained environment with random movement and resemble complex behaviors such as schooling fish~\cite{gautrais2008key} and marching locusts~\cite{buhl2006disorder}.
However, in a more structured environment such as a particular traffic scene where the interactions of agents are time-varying and the environment is non-stationary, it is difficult to design manual controllers or use rules to recover the underlying collective behaviors.

When the interactive environment is available, reinforcement learning becomes a promising approach to learn the controllers for actuating the SDP. 
Recently, many multi-agent reinforcement learning (MARL) methods have been developed to play competitive multi-player games, such as Hide and Seek~\cite{baker2019emergent}, Football~\cite{kurach2020google}, Go and other board games~\cite{schrittwieser2020mastering}, and StarCraft~\cite{samvelyan2019starcraft}.
However, it is challenging to apply the existing MARL to simulate SDP systems. One essential issue is that each constituent agent in a SDP system is self-interested and the relationship between agents is constantly changing. As illustrated in Fig.~\ref{fig:teaser}\textbf{A}, the vehicles in the traffic system demonstrate social behaviors that are either cooperative or competitive depending on the situation. Meanwhile, the cooperation and the competition naturally emerge as the outcomes of the self-driven instinct and the multi-agent interaction. Thus it is difficult to generate those social behaviors through a top-down design.
Furthermore, most of the MARL methods assume that the role of each agent is pre-determined and fixed within one episode.
In cooperative MARL methods~\cite{rashid2018qmix,sunehag2017value,foerster2018counterfactual,du2019liir}, all agents need to cooperate to maximize a joint scalar reward. However, the credit decomposition problem centered in such setting is not the major issue of simulating SDP systems, because each agent in SDP has its individual objective and the credit decomposition is no longer needed.
On the other hand, directly applying independent learning~\cite{tan1993multi,schroeder2020independent} will lead to a group of agents with unreasonable behaviors that are aggressive or egocentric, since each agent is trained to maximize its own reward.

In order to simulate SDP systems, the coordination of the self-interested constituents is the major issue.
The coordination of agents in MARL has been previously studied in the context of mixed cooperative and competitive tasks~\cite{leibo2017multi,schwarting2019social}. However, seldom works focus on the simulation of SDP systems. 
Taking the microscopic traffic simulation as an example,
existing works focus on learning high-level controller to indirectly actuate vehicles~\cite{vinitsky2018benchmarks}, or using independent learner to train policies executing in the environments engaging a limited set of agents on simple scenes~\cite{pal2020emergent,zhou2020smarts}, without considering the coordination problem. Instead, we aim to control all the vehicles in the scene by operating on the low-level continuous control space. This allows a higher degree of freedom to learn diverse behaviors but poses a challenging continuous control problem. 
In this work we propose a novel MARL method called \textbf{Co}ordinated \textbf{P}olicy \textbf{O}ptimization (\textbf{CoPO}) to facilitate the coordination of agents at both local and global levels.
To evaluate the proposed method, as illustrated in Fig.~\ref{fig:teaser}, we construct five typical traffic environments as the testing bed. These environments contain rich interactions between agents and complex road structures. Besides, we develop three task-agnostic metrics to characterize different aspects of the learned populations. We show that the proposed CoPO method can achieve superior performance compared to the baselines. Noticeably, collective social behaviors that resemble real-world patterns emerge in the trained population.


\begin{figure}[!t]
\centering
\includegraphics[width=0.95\linewidth]{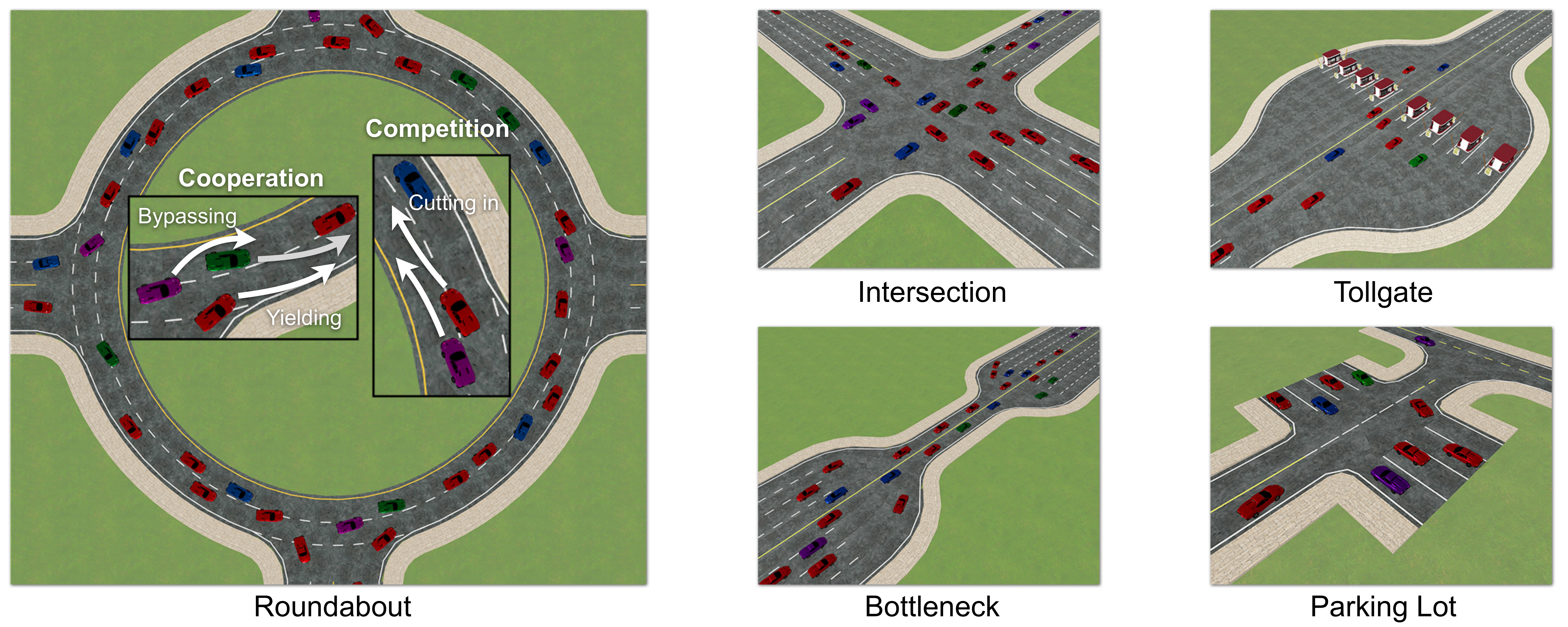}
\caption{The five environments used in our evaluation. We highlight two cooperative and competitive events in the vehicle interactions. 
}
\label{fig:teaser}
\end{figure}

\section{Related Work}


\textbf{Multi-Agent Reinforcement Learning}.
Three typical task settings are explored in MARL~\cite{hernandez2019survey}: the fully cooperative task, the fully competitive task, and the mixed cooperative and competitive task.
Many works focus on the problems in cooperative tasks such as credit assignment~\cite{rashid2018qmix,sunehag2017value,foerster2018counterfactual,du2019liir,zhou2020learning} and communication~\cite{foerster2016learning,vanneste2020learning}.
In competitive tasks, the challenges include modeling the opponents~\cite{foerster2017learning} and generating meaningful opponent via self-play~\cite{heinrich2016deep,vinyals2019grandmaster,baker2019emergent}.
In mixed cooperative and competitive task~\cite{lowe2017multi,leibo2017multi,schwarting2019social}, the roles of the agents are mostly predefined and fixed at run-time. 
Instead, in this work we hope to simulate SDP systems without imposing any constraint on the roles of the agents thus there is more space for emergent social behaviors.
Apart from the fixed role problem, previous methods like MADDPG~\cite{lowe2017multi}, QMIX~\cite{rashid2018qmix}, QTRAN~\cite{son2019qtran}, and COMA~\cite{foerster2018counterfactual} are only applicable to discrete action space. We instead focus on learning policies for continuous control.

The independent PPO~\cite{schroeder2020independent} and the Mean Field MARL~\cite{yang2018mean} are two recent methods applied for continuous control in MARL.
The Mean Field MARL method proposed by Yang \textit{et al.}~\cite{yang2018mean} approximates the value of global reward $Q(s,\mathbf a)$, $\mathbf a = [a_1, ..., a_N]$ by agent-independent value functions $Q_i(s, a_i, \bar{a}^{\text{N}}_i)$,  where $\bar{a}^{\text{N}}_i$ is the average action of the nearby agents around agent $i$. Following similar design but in the context of simulating SDP, we use the mean field to compute neighborhood reward and use that reward to enforce local coordination, instead of using mean field to factorize joint value via averaging actions as done in ~\cite{yang2018mean}.

\textbf{Traffic simulation}. 
Simulating the macroscopic and microscopic traffic systems has been studied for decades~\cite{karlaftis2011statistical}. As for the macroscopic traffic system where the dynamics of individuals is simple and the high-level characters of flow such as the traffic throughput are more concerned, early works focus on designing controllers to direct a fleet of vehicles into a dense and stable platoon~\cite{shladover1995review,besselink2017string}.
CityFlow~\cite{zhang2019cityflow} and FLOW~\cite{wu2017flow} use RL agents to steer the low-level controllers of vehicles in order to investigate the traffic phenomenon in large-scale macroscopic traffic flow.
For microscopic system, researchers also use behavioral model~\cite{treiber2017intelligent} or hand-designed control laws~\cite{knorr2012reducing,wang2016connected} to control individual vehicles.
Zhou \textit{et al.}~\cite{zhou2020smarts} propose SMARTS simulator to investigate the interaction of RL agents and social vehicles in atomic traffic scenes. 
Pal \textit{et al.~}\cite{pal2020emergent} find the emergent road rules of MARL agents trained by simple independent learner, which are also shown in our experiments.
In this work, we work on the traffic environments with complex real-world scenarios such as tollgate and parking lot. The traffic flow can contain more than $40$ vehicles. In contrast to previous work, we also propose novel method to improve the collective motions that better simulate SDP systems.

\section{Preliminaries}

\subsection{Problem Setting}

SDP system has two distinguishable properties: (1) each individual agent is self-interested, (2) the individual agent can vary to be cooperative or competitive with others within one episode and (3) the individual agent only actively interacts with its nearby agents. 

To accommodate these properties, we formulate a SDP system as a set of Decentralized Partially Observable Markov Decision Processes (Dec-POMDPs)~\cite{gupta2017cooperative} represented by a tuple $G = <\mathcal E, \mathcal S, \mathcal A, \mathcal P, \mathcal R, \rho_0, \mathcal O, \mathcal Z, \gamma>$. 
$\mathcal E$ is the set of the agent indices. We define the active agent set at each environmental time step $t$ as $\mathcal E_t = \{i_{1,t}, ..., i_{{k_t}, t}\} \subset \mathcal E$ wherein $k_t$ is the number of existing agents at this step.
We consider a partially observable setting in which each agent can not access the environmental state $s_t$ and instead draws individual observation $o_{i,t}$ according to the local observation function $o_{i,t} = \mathcal Z_{i}(s_t): \mathcal S\to \mathcal O$. 
The joint action space $\mathcal A_t \subset \mathcal A$ is composed of the union of all active agents' action spaces $\bigcup_{i=1}^{k_t} \mathcal A_{i,t}$.
Each agent $i$ chooses its action $a_{i,t} \in \mathcal A_{i,t}$ following its policy $a_{i,t}\sim \pi_i(\cdot|o_t)$.
These actions form a joint action $\mathbf a_t = \{a_{i,t}\}_{i=1}^{k_t} \in \mathcal A_{t}$ causing a transition on the environmental state subjected to the state transition distribution $\mathcal P(s_{t+1}|s_{t}, \mathbf a_t)$. 
Each agent receives their \textit{individual reward} from its reward function $r_{i,t} = \mathcal R_{i}(s_t, \mathbf a_t)$. $\rho_0$ is the initial state distribution and $\gamma$ is the discount factor. So in test time our system is decentralized, where each agent is fed with its local observation and actuated by its policy and interacts with others.

Here we define the \textit{environmental episode} $\tau = \{(s_t, \mathbf a_t, r_{1,t}, ..., r_{k_t, t})\}_{t=0}^{T}$, where $T$ is the environmental horizon. An environmental episode thus contains a set of \textit{agent episodes}: $\{(o_{i,t}, a_{i,t}, r_{i,t})\}_{t=t^s_i}^{t^e_i}$, wherein $t^s_i$ and $t^e_i$ denote the environmental steps when agent $i$ enters and terminates, respectively.
Suppose the policy $\pi_i$ is parameterized by $\theta_i$, the \textit{individual objective} for each agent is defined as the discounted sum of individual reward (namely the individual return):
$
    J_i^{I}(\theta_i) = \expect_{\tau}[R_i(\tau)]
$, 
wherein $R_i(\tau) = \sum_{t=t^s_i}^{t^e_i} \gamma^{t-t^s_i} r_{i,t}$.

\subsection{Individual Policy Optimization}
\label{section:individual-policy-optimization}

To simulate SDP systems, a simple approach is the independent policy optimization (IPO)~\cite{schroeder2020independent} where each agent maximizes individual objective as if in the single-agent environment.
Denote the individual value function is $V^I_{i,t} = V^{I}_i(s_t) = \expect [\sum_{t'=t}^{t^e_i} \gamma^{(t'-t)} r_{i,t}]$ and the corresponding advantage function is $A^I_{i,t} = A^I_{i}(s_t, a_{i,t}, \mathbf a_{\neg i, t}) =
r_{i,t} + \gamma V^I_{i}(s_{t+1}) - V^I_{i}(s_t)$, wherein $\mathbf a_{\neg i, t} = \{a_i\}_{j=1,j\ne i}^{k_t}$ is the shorthand of other agents' actions, the policy gradient method computes the gradient of individual objective as~\cite{sutton2018reinforcement}: 
\begin{equation}\label{eq:on-policy-objective}
\nabla_{\theta_i} J_i^I(\theta_i) = \mathop{\mathbb E}_{(s,\mathbf a)} [\nabla_{\theta_i}\log \pi_{\theta_i}(a_i|s)A^{I}_{i}(s, a_i, \mathbf a_{\neg i})].
\end{equation}

As a common practice, in PPO algorithm~\cite{schulman2017proximal} the clipped importance sampling factor $\rho=\pi_{i, \text{new}}(a_i|s)/\pi_{i, \text{old}}(a_i|s)$ is used to mitigate the distribution shift occurred after the policy is updated for several epochs, wherein $\pi_{i, \text{old}}$ is the behavior policy that generate samples and $\pi_{i, \text{new}}$ is the latest policy parameterized by $\theta_i$. We call the resulting objective as the \textit{surrogate objective}:
\begin{equation}
\label{eq:surrogate-objective}
  \hat{J}_i^I(\theta_i) 
  = 
  \mathop{\expect}_{(s,\mathbf a)}
    \min(
    \rho A^{I}_{i},
    \text{clip}(\rho, 1-\epsilon, 1+\epsilon)A^{I}_{i}
  ).
\end{equation}

$\epsilon$ is hyper-parameter. By conducting the stochastic gradient ascent on the surrogate objective w.r.t. the policy parameters, the expected individual return can be improved.
In continuous control RL problem, both the state space and action space are high-dimensional. Therefore $V^I_i$ is approximated using a neural network whose input is typically the local observation. In the centralized training and decentralized execution (CTDE) framework~\cite{lowe2017multi}, the value function takes global information as input by concatenating all agents' local observations.

\section{Coordinated Policy Optimization}

In a SDP system each agent has its individual objective. However, if we simply maximize each individual reward, the system will have sub-optimal solutions where, for example, the agents will become aggressive and egocentric, jeopardizing the performance of the population and leading to critical failures.
On the contrary, if we apply cooperative learning schemes~\cite{rashid2018qmix,sunehag2017value} and consider the summation of individual reward as the joint objective, the trained agents will exhibit unreasonable behaviors such as sacrificing oneself to improve group reward, which is not expected in SDP systems. 
To find a balance, as illustrated in Fig.~\ref{fig:framework} and Algorithm in the Appendix, we devise a novel MARL algorithm called \textbf{Co}ordinated \textbf{P}olicy \textbf{O}ptimization (\textbf{CoPO}) to facilitate the bi-level coordination of agents to learn the controllers of the SDP systems.
Following the CTDE framework, during centralized training, we first propose individual learning objectives by local coordination (Sec.~\ref{section:local-coordination}), a mechanism inspired by the fact that each individual agent is mostly affected by its nearby agents~\cite{yang2018mean}. We coordinate agents' objectives in neighborhood by mixing rewards following a common social psychological principle.
We further design the meta-learning technique to optimize the local coordination process, leading to the global coordination (Sec.~\ref{section:global-coordination}) which can improve collective performance of the population.

\subsection{Local Coordination}\label{section:local-coordination}
IPO only maximizes the individual objective (Eq.~\ref{eq:on-policy-objective}) which leads to egocentric policies that damage the overall performance of the population. 
However, in natural SDP systems, e.g. animal groups, there exists a certain level of cooperative behavior that not only benefits others, but also results better individual utility. 
To resemble such feature, we incorporate the ring measure of social value orientation~\cite{liebrand1984effect,schwarting2019social}, a common metric from social psychology, into our formulation of local coordination.
Concretely, we define the \textit{Local Coordination Factor} (LCF) as a degree $\phi \in [-90^\circ, 90^\circ]$ describing an agent’s preference of being selfish, cooperative, or competitive.\footnote{The social value orientation can be in range $[-180^\circ, 180^\circ]$. We only take the meaningful half of the range.}
We then define the \textit{neighborhood reward} as:
\begin{equation}
r^{N}_{i, t} = 
\cfrac{\sum_{j \in \mathcal N_{d_n}(i, t)} r_{j, t}}{|\mathcal N_{d_n}(i, t)|},
\text{ wherein } 
\mathcal N_{d_n}(i, t) = \{ j: ||\text{Pos}(i) - \text{Pos}(j)|| \le d_n \}.
\end{equation}
$\mathcal N_{d_n}(i, t)$ defines the neighborhood of agent $i$ within the radius $d_n$ at step $t$. Note that the neighborhood of agent is time-varying, therefore the the neighborhood reward can not be simply computed from a fixed set of agents across the episode.
Weighted by LCF, the \textit{coordinated reward} is defined as:
\begin{equation}
    r^{C}_{i,t} = \cos(\phi)r_{i,t} + \sin(\phi)r^N_{i,t}.
\end{equation}


\begin{figure}[!t]
		\centering
		\hfill
	\begin{minipage}{0.54\textwidth}
		\centering
		\includegraphics[width=0.95\textwidth]{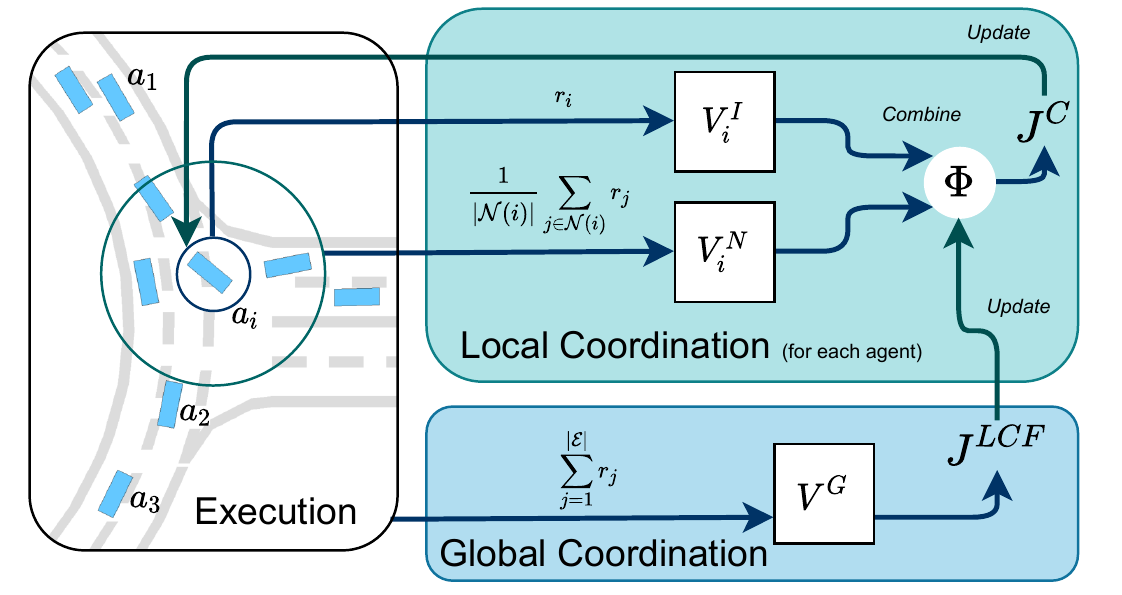}
		\caption{The framework of the CoPO method.}
		\label{fig:framework}
	\end{minipage}\hfill
	\begin{minipage}{0.45\textwidth}
		\centering
		\includegraphics[width=0.85\textwidth]{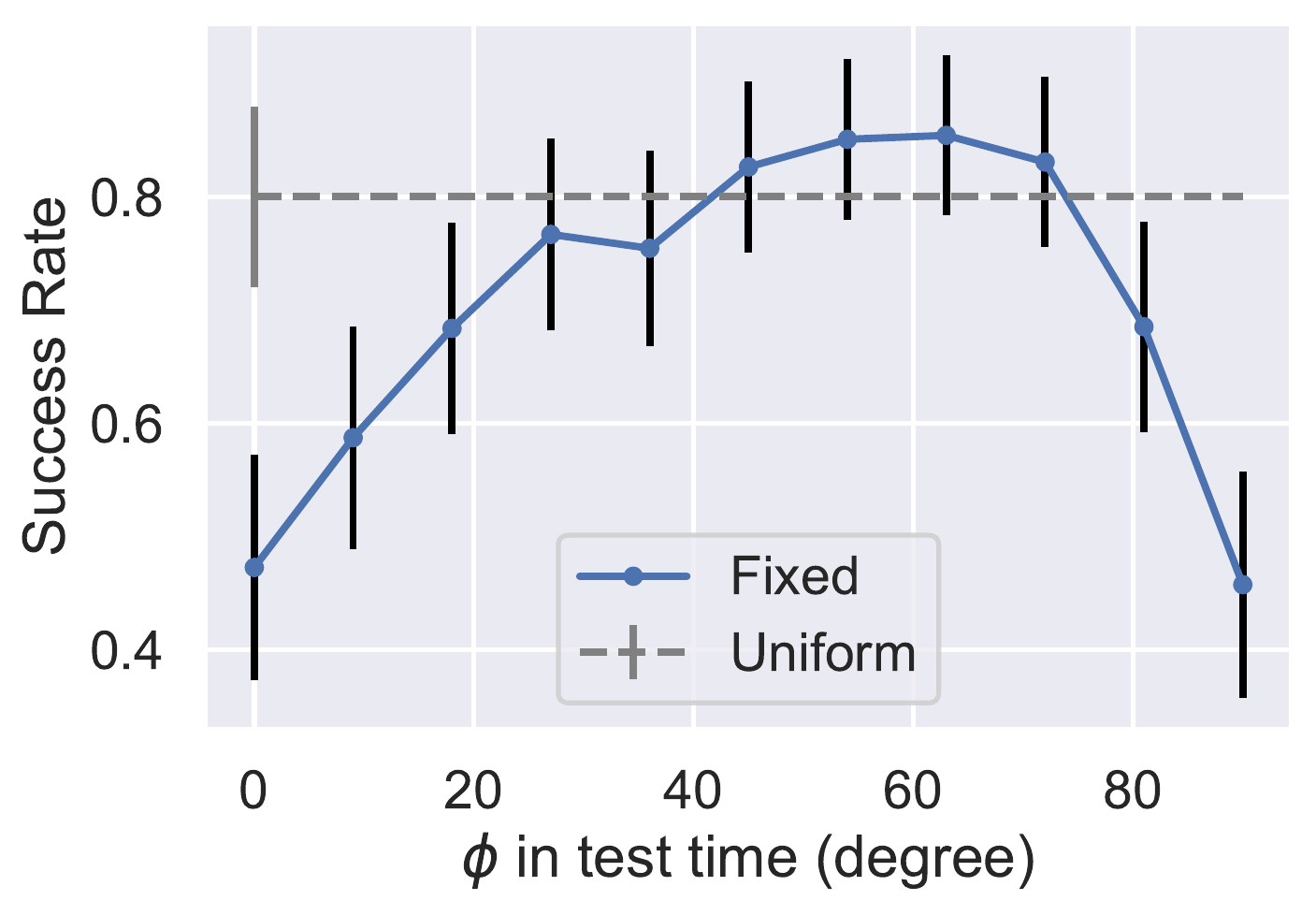}
		\caption{The test-time performance of a population trained with uniformly sampled LCF.}
		\label{fig:varying-LCF-test-time}
	\end{minipage}\hfill
\end{figure}

The LCF enables the local coordination in the optimization process. Supposing the agents maximize the coordinated reward with $\phi = 0^\circ, 90^\circ, -90^\circ$, the agents should become egoistic, altruistic, or sadistic respectively.
Illustratively, Fig.~\ref{fig:varying-LCF-test-time} demonstrates an interesting preliminary result on a simple version of Roundabout environment.
During IPO training, we randomly select $\phi \sim \mathcal U(0^\circ, 90^\circ)$ and assign it to each agent in each episode. Current $\phi$ is fed to policy as extra observation. In test time, we run the trained population for multiple runs. In each run we set a fixed LCF for all agents. 
The test results show that a peak of success rate at certain LCF at test time. This suggests that increasing the awareness of neighbors' interests when training individual agents will propose better policies that increase the collective performance. 
IPO might lead to sub-optimal policies since the individual objectives never consider local coordination with other agents. 

In order to incorporate the LCF into training process to improve the population performance, apart from the value function approximator for the individual reward (Sec.~\ref{section:individual-policy-optimization}), we use another value function to approximate the discounted sum of the neighborhood reward $
V^N_{i,t} 
=  
\expect [\sum_{t}^{t^e_i} \gamma^{t-t^s_i}r^N_{i,t}]
$ and the neighborhood advantage is therefore calculated as:
$
A^N_{i,t} 
=  
r^N_{i,t} + \gamma V^N_{i, t+1} - V^N_{i, t}
$.
The coordinated advantage can be derived as:
$
A^C_{\Phi,i,t} 
= 
\cos(\phi) A^I_{i,t} + \sin(\phi) A^N_{i,t}
$.
We now have the \textit{coordinated objective}, which takes the utilities of neighbors into consideration, as follows:
\begin{equation}
\label{eq:coordinated-objective}
J^C_i(\theta_i, \Phi) 
=  
\expect_{(s,\mathbf a)}[
\min(
\rho A^{C}_{\Phi, i},
\text{clip}(\rho, 1-\epsilon, 1+\epsilon)A^{C}_{\Phi, i}
)].
\end{equation}

We follow a similar training procedure as IPO in Sec.~\ref{section:individual-policy-optimization} with the advantage replaced by the coordinated advantage. 
To improve diversity, we use distributional LCF $\phi \sim \mathcal N(\phi_\mu, \phi_\sigma^2)$, where $\Phi = [\phi_\mu, \phi_\sigma]^\top $ are the learnable parameters of the LCF distribution.

\subsection{Global Coordination}\label{section:global-coordination}

The major challenge of local coordination is the selection of LCF.
Though the experiment in Fig.~\ref{fig:varying-LCF-test-time} shows that there exists an optimal LCF that can maximize the population success rate, it is laborious to search for such LCF. 
We therefore introduce global coordination which enables automatic search of the best LCF distribution to improve population efficiency.

In this section, we introduce how we accomplish global coordination with bi-level optimization method that computes a meta-gradient of the global objective.
We first define the \textit{global objective} that indicates the performance of a population:
\begin{equation}\label{eq:global-objective}
J^{G}(\theta_1, \theta_2, ...) 
= 
\expect_{\tau} [\sum_{i\in \mathcal E_\tau} \sum_{t = t^s_i}^{t^e_i} r_{i, t} ]
=
\expect_\tau
[\sum_{t=0}^{T} \sum_{i \in \mathcal E_{\tau,t}} r_{i, t}],
\end{equation}
wherein $\mathcal E_{\tau, t}$ denotes the active agents at environmental time step $t$ of the episode $\tau$. Here we remove $\gamma$ since it is ambiguous when the start steps of agents are not aligned.
Eq.~\ref{eq:global-objective} can not be directly optimized when computing policy gradients of individual policies' parameters. 

We introduce the concept of \textit{individual global objective} to make the optimization of the global objective feasible. 
By this way, we can turn the system-level optimization process into agent-level optimization processes, so that the individual agent's data can be easily accessed.
We factorize $J^{G}(\theta_1, ...)$ to individual global objectives $J^{G}_i(\theta_i)$ such that maximizing the objective for each agent $i$ is equivalent to maximizing Eq.~\ref{eq:global-objective}. The \textit{individual global objective} is defined as:
\begin{equation}\label{eq:individual-global-objective}
J^{G}_i(\theta_i | \theta_1, ...)
=
\expect_{\tau}[
\sum_{t=t_i^s}^{t_i^e} \cfrac{\sum_{j \in \mathcal E_{\tau,t}}  r_{j, t}}{|\mathcal E_{\tau,t}|}].
\end{equation}
If we introduce the \textit{global reward} $r^G = {\sum_{j \in \mathcal E_{\tau,t}}  r_{j, t}}/{|\mathcal E_{\tau,t}|}$ as the average reward of all active agents at step $t$, then this objective is the cumulative global reward in the time interval when agent $i$ is alive.

\begin{proposition}[Global objective factorization]
Suppose each agent $i \in \mathcal E$ can maximize its individual global objective $J_i^G$, then the original global objective $J^{G}$ is maximized.	
\end{proposition}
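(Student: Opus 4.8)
The plan is to reduce the proposition to a single algebraic identity, namely that the global objective factorizes exactly as the sum of the individual global objectives, $J^G = \sum_{i \in \mathcal E} J^G_i$, and then to read off the optimization claim from this decomposition. The choice of the denominator $|\mathcal E_{\tau,t}|$ in Eq.~\ref{eq:individual-global-objective} is precisely what makes this identity hold, so establishing it will be the substantive step.

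First I would push the sum over agents inside the expectation by linearity and write $\sum_{i\in\mathcal E} J^G_i = \expect_\tau[\sum_{i\in\mathcal E}\sum_{t=t^s_i}^{t^e_i} r^G_t]$, where $r^G_t = (\sum_{j\in\mathcal E_{\tau,t}} r_{j,t})/|\mathcal E_{\tau,t}|$ is independent of $i$. The key observation is that the double sum ranges exactly over the pairs $(i,t)$ with $t^s_i \le t \le t^e_i$, i.e. over agents active at step $t$; hence I may exchange the order of summation and replace $\sum_{i\in\mathcal E}\sum_{t=t^s_i}^{t^e_i}$ by $\sum_{t=0}^{T} \sum_{i\in\mathcal E_{\tau,t}}$. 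Since $r^G_t$ does not depend on the inner index, the inner sum contributes a factor $|\mathcal E_{\tau,t}|$, which cancels the denominator in $r^G_t$ and leaves $\sum_{t=0}^{T} \sum_{j\in\mathcal E_{\tau,t}} r_{j,t}$. Taking the expectation recovers precisely the right-hand form of $J^G$ in Eq.~\ref{eq:global-objective}, proving $J^G = \sum_{i\in\mathcal E}J^G_i$.

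With the factorization in hand, the optimization claim is immediate: reading the hypothesis as the existence of a parameter configuration $\theta^\star$ at which every individual global objective attains its maximum, for any other configuration $\theta$ I would bound $J^G(\theta) = \sum_i J^G_i(\theta) \le \sum_i J^G_i(\theta^\star) = J^G(\theta^\star)$, so that $\theta^\star$ also maximizes $J^G$. I would additionally record the per-agent gradient alignment $\nabla_{\theta_i} J^G = \sum_k \nabla_{\theta_i} J^G_k$ implied by the identity, to connect the decomposition to the actual gradient-based training procedure.

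I expect the main obstacle to be the bookkeeping around the time-varying active set rather than any analytic estimate: I must verify that the index set $\{(i,t): t^s_i \le t \le t^e_i\}$ is genuinely identical to $\{(t,i): i\in\mathcal E_{\tau,t}\}$, which is exactly the definition of $\mathcal E_{\tau,t}$ as the agents alive at step $t$, and confirm that agents entering and exiting mid-episode produce no boundary terms that would spoil the cancellation of $|\mathcal E_{\tau,t}|$. A secondary subtlety is the coupling between agents, since each $J^G_k$ depends on all parameters $\theta_i$ through the shared trajectory $\tau$; the clean ``sum of maxima'' argument therefore delivers a joint maximizer of $J^G$ only under the reading that $\theta^\star$ maximizes the individual global objectives simultaneously, so I would state that reading explicitly to avoid overclaiming the equivalence.
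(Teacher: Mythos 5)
Your proposal is correct and follows essentially the same route as the paper's proof: both establish the identity $J^G = \sum_i J^G_i$ by exchanging the order of summation over agents and time steps (so that the factor $|\mathcal E_{\tau,t}|$ cancels the denominator in the global reward), and then conclude by monotonicity of the sum in each summand. Your explicit caveat that the hypothesis must be read as a \emph{simultaneous} maximizer $\theta^\star$ of all the coupled objectives $J^G_i$ is a point the paper glosses over (it instead invokes a QMIX-style monotonicity condition $\partial J^G/\partial J^G_i \ge 0$), and your direct bound $J^G(\theta) = \sum_i J^G_i(\theta) \le \sum_i J^G_i(\theta^\star) = J^G(\theta^\star)$ is, if anything, the cleaner way to finish.
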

We prove that such factorization does not affect the optimality of global objective in Appendix.

To improve global objective via optimizing LCF, we need to compute the gradient of Eq.~\ref{eq:individual-global-objective} w.r.t. $\Phi$.
Denote the parameters of policies before and after optimizing Eq.~\ref{eq:coordinated-objective} as $\theta^{\text{old}}_i$ and $\theta^{\text{new}}_i$ respectively, we can compute the gradient applying the chain rule as:
\begin{equation}
	\nabla_\Phi J^{G}_i(\theta^{\text{new}}_i) = 
	\underbrace{\nabla_{\theta^{\text{new}}_i} J^{G}_i(\theta^{\text{new}}_i)}_\text{{1st Term}}
	\underbrace{\nabla_{\Phi} \theta^{\text{new}}_i}_{\text{2nd Term}}.
\end{equation}

The 1st term resembles the policy gradient as Eq.~\ref{eq:surrogate-objective} with the objective replaced by $J^{G}_i$:
\begin{equation}
\label{eq:1st-term-in-meta-LCF}
\text{1st Term} =
\expect_{(s, \mathbf a)\sim \theta^{\text{old}}_i} 
[
\nabla_{\theta_i^{\text{new}}}
\min(
\rho A^{G}(s,\mathbf a),
\text{clip}(\rho, 1-\epsilon, 1+\epsilon)A^{G}(s, \mathbf a)
)
].
\end{equation}
Here we use an extra global value function $V^G$ to estimate the value of global reward $r^G$, then compute the global advantage $A^G$. The samples $(s, \mathbf a)$ are generated by the behavior policy $\theta^{\text{old}}_i$.

By following first-order Taylor expansion, the 2nd term can be computed as follows. Note that $\nabla_{\theta^{\text{old}}_i} J^C_i(\theta_i^{\text{old}}, \Phi)$ recovers the vanilla policy gradient in Eq.~\ref{eq:on-policy-objective}.
\begin{equation}
\label{eq:2nd-term-in-meta-LCF}
\text{2nd Term} =
\nabla_{\Phi} (\theta^{\text{old}}_i + \alpha \nabla_{\theta_i^{\text{old}}} {J^C_i(\theta_i^{\text{old}}}, \Phi)) =
\alpha \expect_{(s, \mathbf a)\sim \theta^{\text{old}}_i}
[ 
 \nabla_{\theta^{\text{old}}_i} \log \pi_{\theta^{\text{old}}_i} (a_i|s) \nabla_{\Phi} A^C_{\Phi, i} 
].
\end{equation}

Averaging over all agents and combining Eq.~\ref{eq:1st-term-in-meta-LCF} and Eq.~\ref{eq:2nd-term-in-meta-LCF}, we can derive the LCF objective as:
\begin{equation}
\label{eq:meta-objective}
J^{LCF}(\Phi) 
=
\expect_{i, (s, \mathbf a) \sim \theta^{\text{old}_i}}
[
\nabla_{\theta_i^{\text{new}}}
\min(
\rho A^{G},
\text{clip}(\rho, 1-\epsilon, 1+\epsilon)A^{G}
)
]
[
\nabla_{\theta^{\text{old}}_i} \log \pi_{\theta^{\text{old}}_i} (a_i|s)
]
A^{C}_{\Phi, i}
.
\end{equation}
We then conduct stochastic gradient ascent on $J^{LCF}$ w.r.t. $\Phi$ with learning rate $\alpha$. 

The detailed procedure of CoPO is given in Appendix.
In our implementation, we follow the commonly used parameter sharing technique~\cite{terry2021revisiting} so that all agents share the same set of neural networks, which guarantees the scalability of our method. 
As a summary, our method totally employs 4 neural networks: the policy network, the individual value network, the neighborhood value network, and the global value network.
The input to all networks is always the local observation of the agent. Our further study (see Sec.~\ref{section:ablation-studies}) shows that centralized critic method, which feeds global information to the value networks, does not yield better performance and usually causes unstable training. The reason might be the time-varying input to the centralized critics, that the combined observation of nearby agents is always changing and doesn't hold a temporal consistency across the episodes.

\section{Experiments}
\label{section:experiment}
\subsection{Evaluation Platform}
\label{section:evaluation-platform}

We develop a set of 3D traffic environments to evaluate MARL methods for simulating SDP systems based on MetaDrive~\cite{li2021metadrive}, a lightweight and efficient microscopic driving simulator\footnote{MetaDrive can be found at: \url{https://github.com/decisionforce/metadrive}.}.
The environments are developed in Panda3D~\cite{goslin2004panda3d} and Bullet Engine, which can run 80 FPS on a standard workstation. 
Examples of the environments are shown in Fig.~\ref{fig:teaser}.
The descriptions and typical settings of the five traffic environments are as follows:

\textbf{Roundabout}: A four-way roundabout with two lanes. 40 vehicles spawn during environment reset. This environment includes merge and split junctions.

\textbf{Intersection}: An unprotected four-way intersection allowing bi-directional traffic as well as U-turns. Negotiation and social behaviors are expected to solve this environment. We initialize 30 vehicles.

\textbf{Tollgate}: Tollgate includes narrow roads to spawn agents and ample space in the middle with multiple tollgates. The tollgates create static obstacles where the crashing is prohibited. We force agent to stop at the middle of tollgate for 3s. The agent will fail if they exit the tollgate before it is allowed to pass. 40 vehicles are initialized. Complex behaviors such as deceleration and queuing are expected. Additional states such as whether vehicle is in tollgate and whether the tollgate is blocked are given.

\textbf{Bottleneck}: Complementary to Tollgate, Bottleneck contains a narrow bottleneck lane in the middle that forces the vehicles to yield to others. We initialize 20 vehicles.

\textbf{Parking Lot}: A compact environment with 8 parking slots. Spawn points are scattered in both parking lots and external roads. 10 vehicles spawn initially. In this environment, we allow agents to back their cars to spare space for others.  Maneuvering and yielding are the key to solve this task.

Those environments cover several scenarios in previous simulators~\cite{zhou2020smarts,pal2020emergent} but are extended with real-world scenarios rather than atomic scenes. Besides, our environments support dense traffic flow where all vehicles are controlled through the low-level continuous signals. All the environments can be further concatenated to form more complex tasks for the future study of multi-task learning.

Evaluating a population simulated by MARL is an open question. 
In this work, we define a set of general-purpose metrics that can characterize different aspects of the SDP systems.
We choose three episodic metrics with minimal assumptions on the tasks. 
We evaluate the competence of population with \textit{success rate}. It is the ratio of the number of agents that reach their goals over the total number of agents in one episode.
We also evaluate the \textit{efficiency} of the population. It indicates the difference between successes and failures in a unit of time $(N_{\text{success}} - N_\text{{failure}})/T$. 
It is possible for a population to achieve high \textit{success rate} but has low \textit{efficiency}, because the agents in the population are running in very low velocity.
The third metric we adopt here is the \textit{safety}. It is defined as the total number of critical failures for all agents in one episode.

\subsection{Experiment Setting}

We compare the independent policy optimization (\textit{IPO}) method~\cite{schroeder2020independent} using PPO~\cite{schulman2017proximal} as the individual learners. 
We further encode the nearby agents' states into the input of value functions (centralized critics) following the idea of Mean Field MARL~\cite{yang2018mean} and form the mean field policy optimization (\textit{MFPO}). 
We examine different variants of centralized critics, but find MFPO yields best performance (see Sec.~\ref{section:ablation-studies}).
The curriculum learning (\textit{CL})~\cite{narvekar2020curriculum} is also a baseline, where we chunk the training into 4 phases and gradually increase initial agents in each episode from $25\%$ to $100\%$ of the target number.

We conduct experiments with the aforementioned environments and algorithms using RLLib~\cite{liang2018rllib}, a distributed learning system which allows large-scale parallel experiments. Generally, we host 4 concurrent trials in an Nvidia GeForce RTX 2080 Ti GPU. Each trial consumes 2 CPUs with 4 parallel rollout workers. 
Each trial is trained over roughly 1M environmental steps, which corresponds to about 55 hours in real-time transportation system and $+$2,000 hours of individual driving experience (assume averagely 40 vehicles running concurrently).
All experiments are repeated 8 times with different random seeds. Other hyper-parameters are given in Appendix.

\subsection{Main Results}
\label{section:main-results}

\begin{figure}[!t]
    \centering    
    \includegraphics[width=\linewidth]{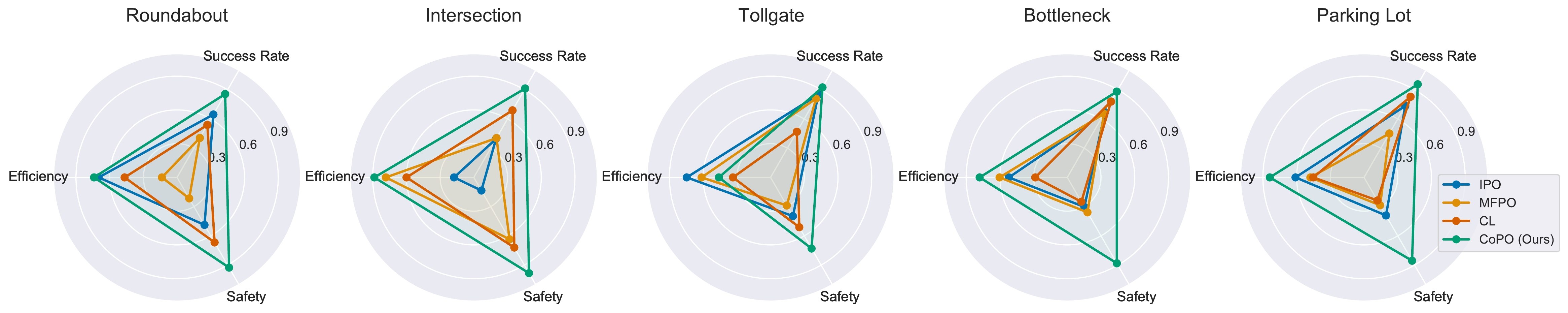}
   	\caption{Performance of the trained populations from different MARL methods.
   	}
    \label{fig:main-result}
\end{figure}

\begin{figure}[!t]
    \centering    
    \includegraphics[width=0.95\linewidth]{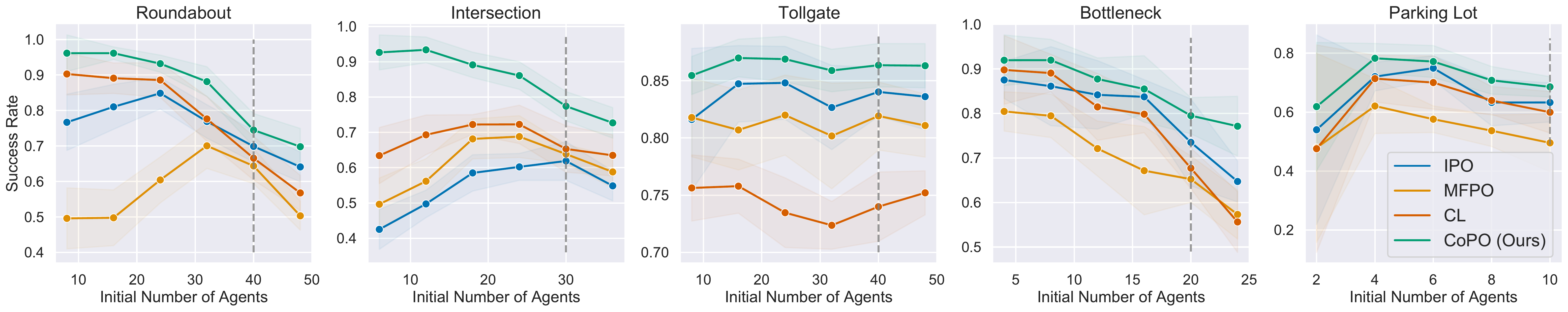}
   	\caption{The success rate of different trained populations with varying number of initial agents at test time. The gray lines denote the initial number of agents in the training environments.}
    \label{fig:varying-agents}
\end{figure}

\begin{table}[!t]
\centering
\begin{small}
\caption{Success rate of different approaches.}
\label{tab:main-results}
\begin{tabular}{@{}cccccc@{}}
\toprule
                   & Roundabout & Intersection & Tollgate   & Bottleneck  & Parking Lot \\\midrule
IPO                & 70.81 {\tiny $\pm$1.95} & 60.47 {\tiny $\pm$5.79}   & 82.90 {\tiny $\pm$2.81} & 72.43 {\tiny $\pm$3.79}  & 61.05 {\tiny $\pm$2.81}  \\
MFPO               & 64.27 {\tiny $\pm$3.68} & 67.74 {\tiny $\pm$4.19}   & 81.05 {\tiny $\pm$3.07} & 67.40 {\tiny $\pm$4.77} & 53.96 {\tiny $\pm$4.65}  \\
CL & 65.48 {\tiny $\pm$3.96} & 62.03 {\tiny $\pm$4.41}   & 73.72 {\tiny $\pm$3.46} & 68.81 {\tiny $\pm$4.39}  & 60.62 {\tiny $\pm$2.25}  \\ 
CoPO (Ours)            & \textbf{73.67} {\tiny $\pm$3.71} & \textbf{78.97} {\tiny $\pm$4.23}   & \textbf{86.13} {\tiny $\pm$1.76} & \textbf{79.68} {\tiny $\pm$2.91}  & \textbf{65.04} {\tiny $\pm$1.59} \\\bottomrule
\end{tabular}%
\end{small}
\vspace{-1em}
\end{table}


Table~\ref{tab:main-results} shows the main results of all the evaluated methods. Because of the bi-level coordination in policy optimization, our CoPO achieves superior success rate in all 5 environments.
Noticeably in the most difficult environment, the unprotected Intersection, our method outperforms baselines with a substantial margin. That environment requires agents to cooperate frequently in order to determine who should go fist. The populations generated from other methods cause severe congestion in the middle of intersection which prevents any agent going through. See Sec.~\ref{section:behavioral-analysis} and demo video for detailed visualization.

We also find the MFPO performs worse than the independent PPO in several environments. That might be due to the neighborhood changing all the time. Simply concatenating or averaging neighbors' states as the additional input to the value functions makes the training unstable. A detailed comparison of different design choices in centralized critics is given in Sec.~\ref{section:ablation-studies}.

Fig.~\ref{fig:main-result} presents the radar plots under the three metrics. In each environment, we normalize each metric to its maximal and minimal values in all evaluated episodes for all algorithms.
For \textit{safety} measurement, we use negative values of total crashes in normalization. We can see that CoPO achieves superior results across all three metrics. Noticeably, CoPO has a substantial improvement in \text{safety}, showing the importance of the bi-level coordination.

To further evaluate the generalization of the populations, we run the trained policies in the test environments with different initial numbers of agents. As shown in Fig.~\ref{fig:varying-agents}, we find that in Roundabout and Intersection, where the interactions are extremely intensive, IPO and MFPO overfit to the initial number of agents. In the environment with sparse traffic, their performance is inferior to that in the environment with more agents. On the contrary, our method does not overfit these two environments. In Parking Lot, it seems that all methods has some degree of overfitting to the initial number of agents.
The results are consistent with the finding in \cite{lanctot2017unified}, showing that strategies learned via a single instance of MARL algorithm can overfit to the policies of other agent in the population. 
In Appendix, we conduct an experiment using a simple heuristic to control part of the vehicles in the population to test the generalizability of different training methods. The experiment shows that CoPO trained agents are robuster than IPO trained ones when interacting with vehicles controlled by unseen policies. 
It is left to future work on improving the generalization of MARL methods.

\begin{figure}[!t]
     \centering
     \hfill
     \begin{subfigure}{0.245\linewidth}
         \centering
         \includegraphics[width=\linewidth]{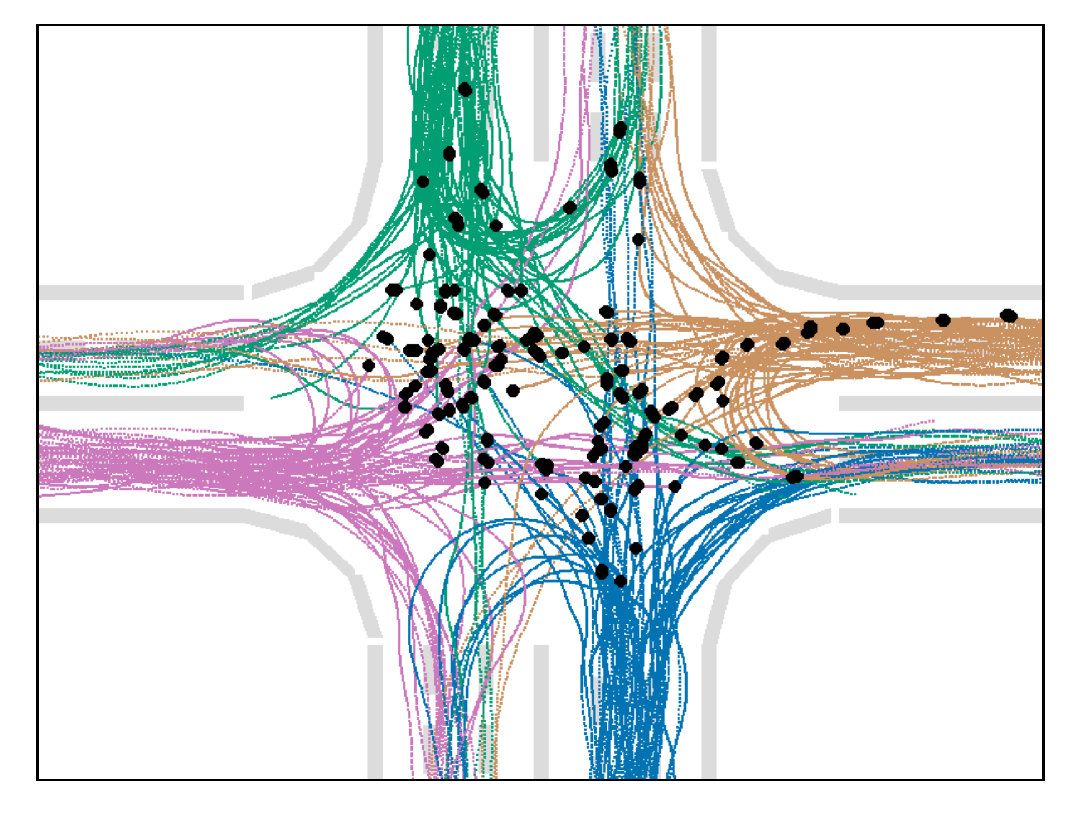}
                  \caption{IPO}
     \end{subfigure}
     \hfill
     \begin{subfigure}{0.245\linewidth}
         \centering
         \includegraphics[width=\linewidth]{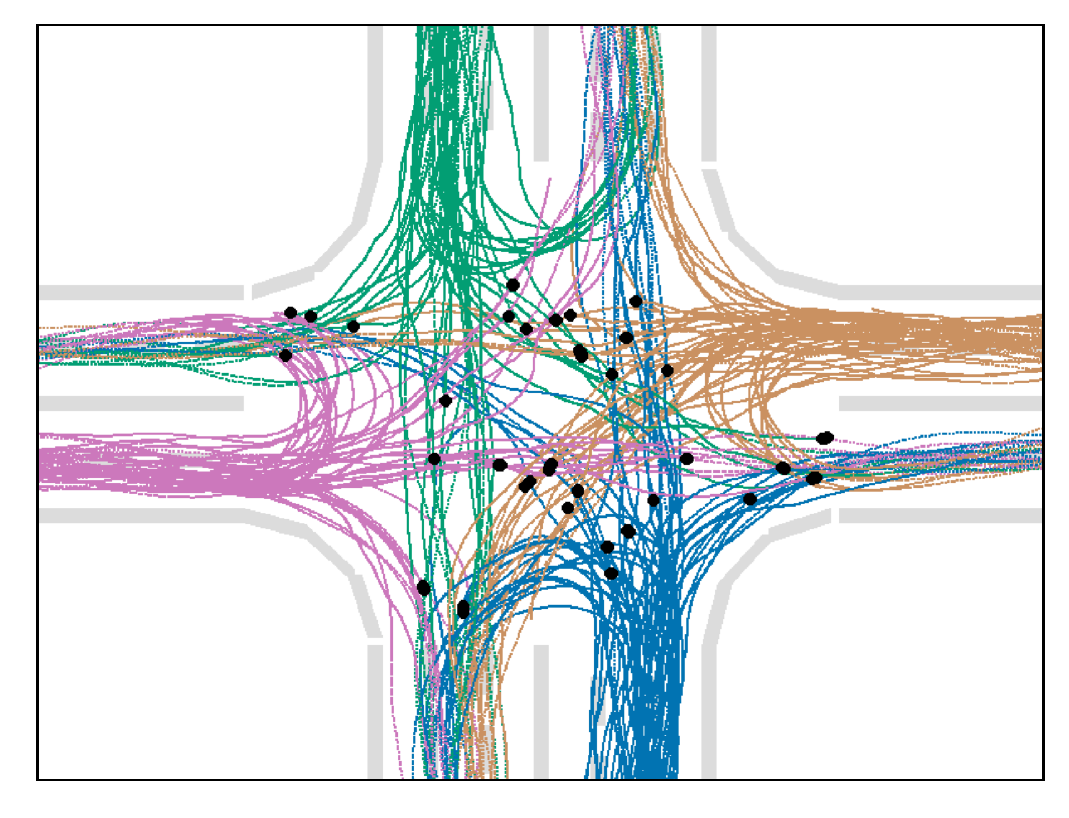}
                  \caption{MFPO}
     \end{subfigure}
          \hfill
     \begin{subfigure}{0.245\linewidth}
         \centering
         \includegraphics[width=\linewidth]{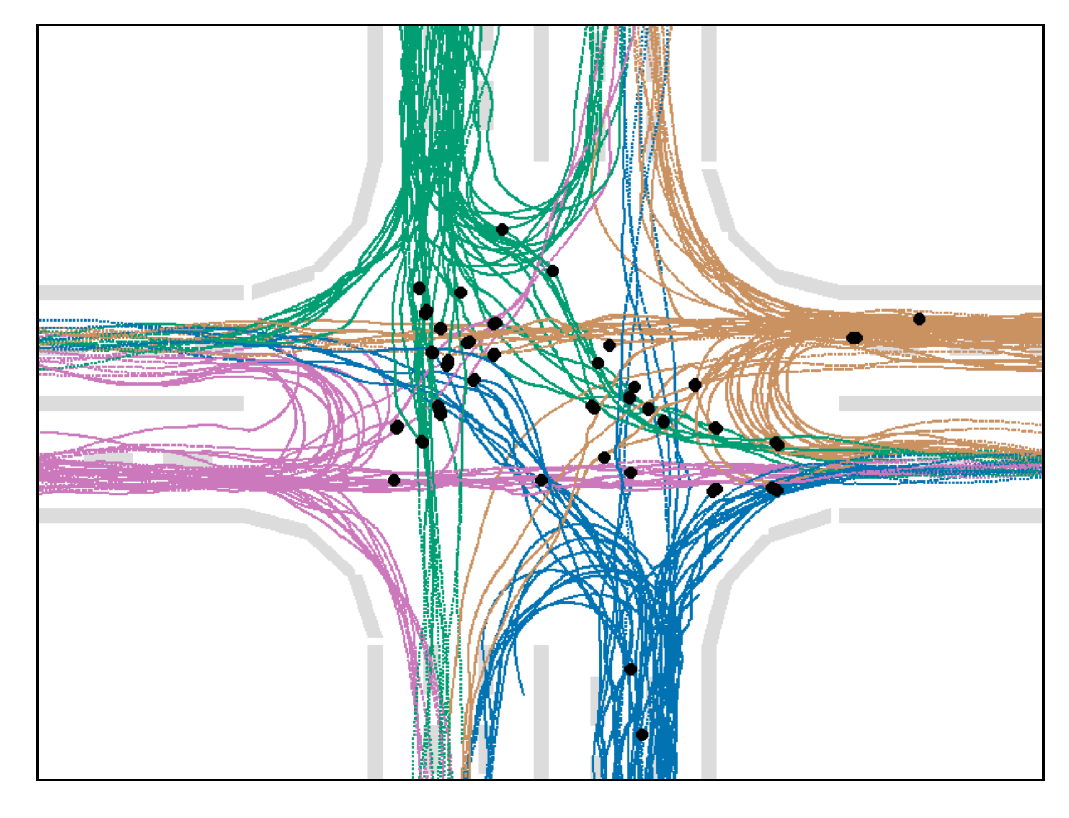}
                  \caption{CL}
     \end{subfigure}
          \hfill
          \begin{subfigure}{0.245\linewidth}
         \centering
         \includegraphics[width=\linewidth]{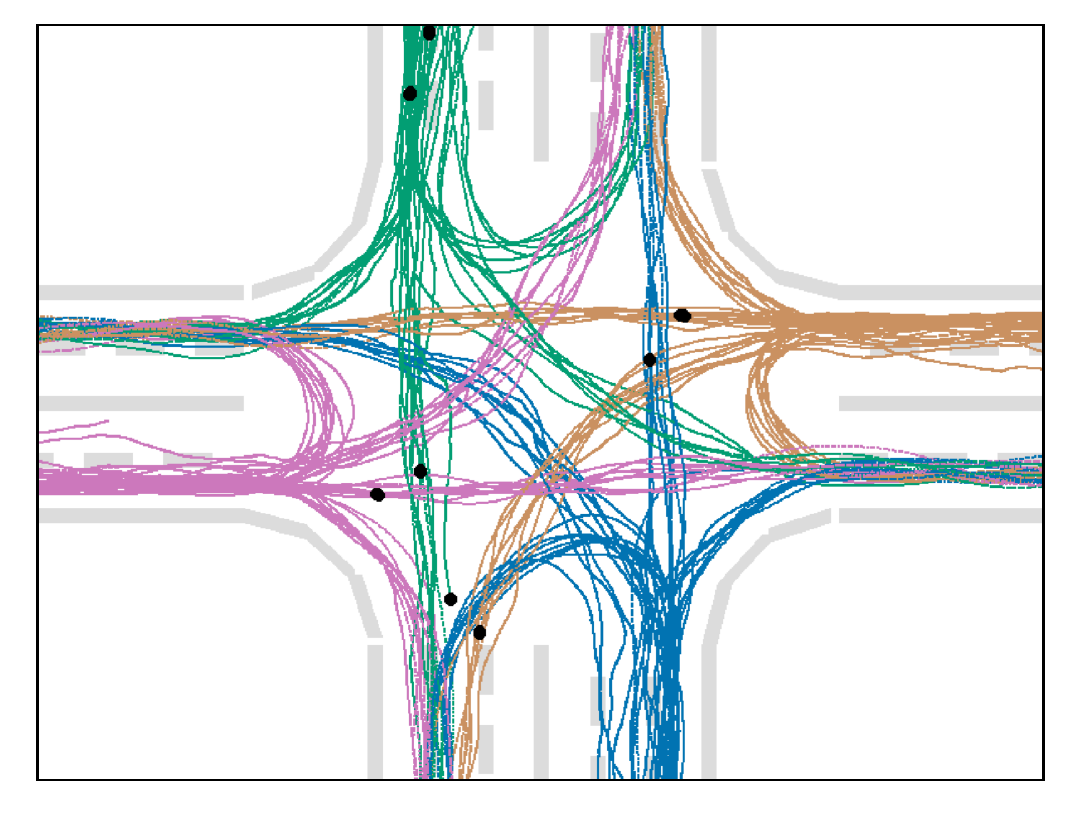}
         \caption{CoPO (Ours)}
     \end{subfigure}
          \hfill
        \caption{
        Plot of trajectories of trained populations in Intersection. The dark spots indicate the locations where critical failures happen. CoPO produces safer and more coordinated traffic flow. 
        }
        \label{fig:trajectories}
        \vspace{-0.5em}
\end{figure}

\begin{figure}[!t]
\centering
\includegraphics[width=\linewidth]{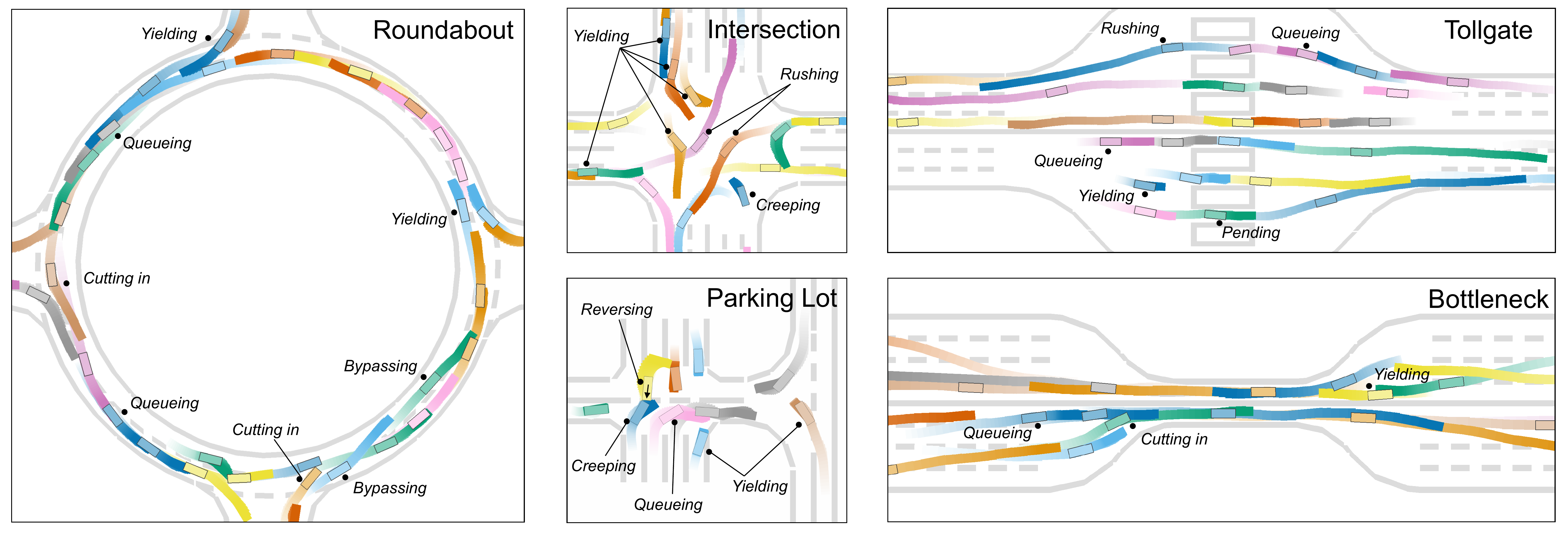}
\caption{Visualizing the social behaviors of CoPO populations. Diverse behaviors emerge in CoPO populations, which are highlighted with black dots in the plot.
We plot the past 25 and future 25 positions of each vehicle with different colors, where the luminosity of a trajectory decreases from light to dark, representing its past to future positions. 
}
\label{fig:qualitative}
\end{figure}

\subsection{Behavioral Analysis}
\label{section:behavioral-analysis}
Fig.~\ref{fig:trajectories} plots the visualization of all trajectories occurred in one environmental episode, where the agents spawning at same entries share the same colors. 
The population trained from CoPO exhibits collective behaviors, while the critical failures reduce drastically compared to other populations. The population trained from other methods tend to directly rush into the center of Intersection, causing severe congestion and frequent crashes. On the contrary, the population from CoPO tends to drive collectively and successfully avoid crashes through social behaviors such as yielding, with superior success rate and high safety.
The qualitative result suggests that CoPO can resemble the collective motions widely existing in SDP systems~\cite{zhou2013measuring}.

Fig.~\ref{fig:qualitative} further visualizes the temporal behaviors of CoPO agents in 5 environments. Each vehicle in the population tends to drive socially and react to its nearby vehicles. A set of diverse behaviors emerge through the interactions, including socially compliant behaviors such as yielding, queuing, and even reversing to leave more room to others, as well as aggressive behaviors such as cutting in and rushing. 
The population trained from CoPO successfully reproduces diverse interaction behaviors in traffic systems.

\subsection{Ablation Studies}
\label{section:ablation-studies}

We evaluate different variants of centralized critics in Roundabout environment.
The \textit{Concat} variant refers to simply concatenating all states of nearest K=4 agents as a long vector fed to the value networks. 
The \textit{Mean Field} method uses the average of the states from nearby agents within a given radius (we use 10 meters).
The \textit{CF} refers to ``counterfactual'', namely adding the neighbors' actions accompanied with their observations as the input to value functions.
Note that the \textit{Mean Field w/ CF} method follows the Mean Field Actor Critic proposed in \cite{yang2018mean}, the difference is that we use the training pipeline of PPO~\cite{schulman2015high} in order to operate in continuous action space.
As illustrated in Fig.~\ref{fig:ccppo}, we find that the design with Mean Field is more stable compared to Concat. CF also stabilizes the training. We therefore report the results of \textit{Mean Field w/ CF} as the MFPO method in Sec.~\ref{section:main-results}.

Fig.~\ref{fig:ablation-copo-cc} plots the learning curves of CoPO with and without centralized critics. We find that CoPO with centralized critics yields inferior success rate and is unstable in some environments compared to the native CoPO.
This might because nearby agents are always changing and doesn't hold the temporal consistency within episode.

\begin{figure}[!t]
    \centering    
    \includegraphics[width=\linewidth]{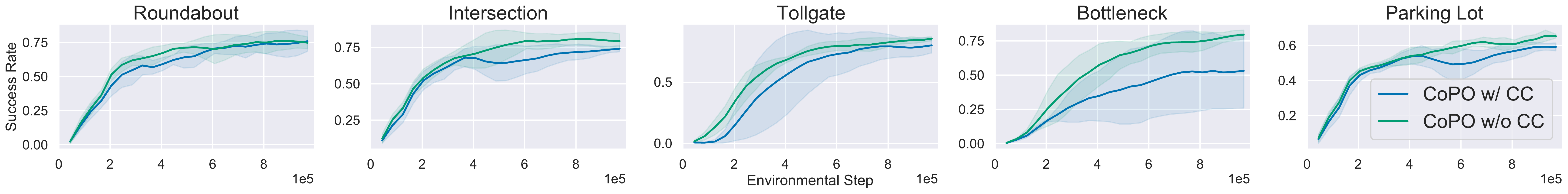}
   	\caption{Comparison of CoPO with or without centralized critics.}
    \label{fig:ablation-copo-cc}
\end{figure}

\begin{wrapfigure}{R}{0.38\textwidth}
    \centering    
    \includegraphics[width=0.9\linewidth]{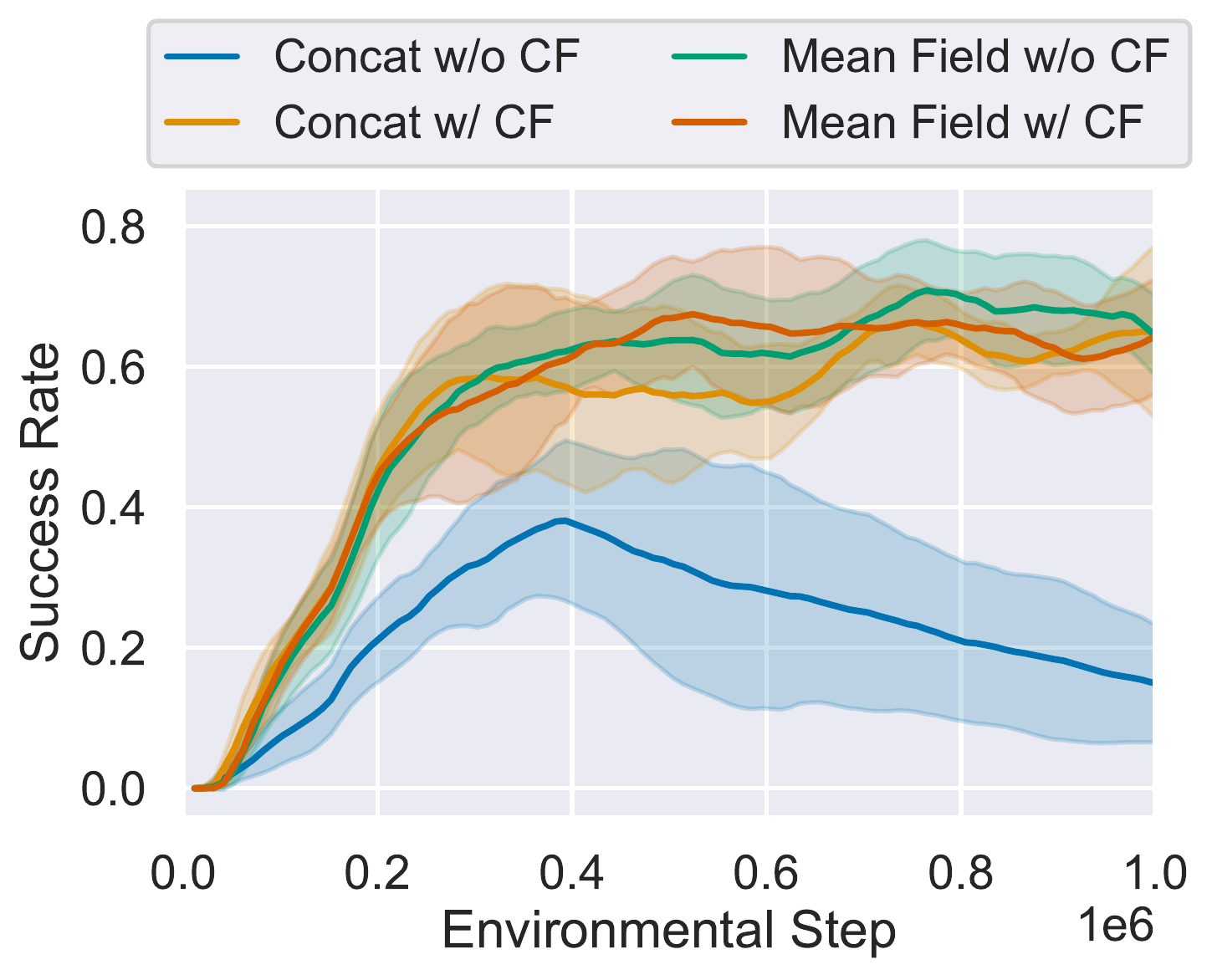}
   	\caption{Comparison of design choices of centralized critic methods.}
    \label{fig:ccppo}
    \vspace{-2em}
\end{wrapfigure}

\begin{table}[!t]
\centering
\begin{small}
\caption{Ablation study on the effectiveness of global coordination. Average success rate is provided.}
\label{table:ablation-study}
\begin{tabular}{@{}lccccc@{}}
\toprule
 Experiment                  & Roundabout & Intersection & Tollgate   & Bottleneck  & Parking Lot \\\midrule
\textbf{(a)} sample $\phi$ from $\mathcal N(0, 0.1^2)$ & 62.38 {\tiny $\pm$7.26} &	70.23 {\tiny $\pm$2.72} & 	60.47 {\tiny $\pm$5.80} &	71.29 {\tiny $\pm$3.27} & 	59.14 {\tiny $\pm$1.29} \\ 
\textbf{(b)} maximize global reward & 0.00 {\tiny $\pm$ 0.00} & 0.00 {\tiny $\pm$ 0.00} & 0.00 {\tiny $\pm$ 0.00} & 0.00 {\tiny $\pm$ 0.00} & 0.00 {\tiny $\pm$ 0.00} \\
\textbf{(c)} maximize neighborhood reward & 65.70 {\tiny $\pm$ 2.21} & 66.83 {\tiny $\pm$ 2.16} & 57.30 {\tiny $\pm$ 3.46} & 71.62 {\tiny $\pm$ 1.22} & 6.34 {\tiny $\pm$ 1.44} \\
\hline
\textbf{CoPO}: sample $\phi$ from $\mathcal N(\phi_\mu, \phi_\sigma^2)$ & {73.67} {\tiny $\pm$3.71} & 78.97 {\tiny $\pm$4.23}   & {86.13} {\tiny $\pm$1.76} & {79.68} {\tiny $\pm$2.91}  & {65.04} {\tiny $\pm$1.59} \\
\bottomrule
\end{tabular}%
\end{small}
\vspace{-0.5em}
\end{table}

In Table~\ref{table:ablation-study}\textbf{(a)}, we validate the effectiveness of global coordination by replacing the LCF distribution with a given normal distribution during the course of training. The proposed method can yield better success rate than the result of manually defined fixed LCF distributions.
In \textbf{(b)}, we use IPO to maximize the global reward directly. Unfortunately, the training fails because it is hard to build the  causal connection between individual's behavior with the average reward of all agents.
In \textbf{(c)}, we use IPO to maximize the neighborhood reward. This method performs poorer than CoPO since we balance the individual reward and the neighborhood reward.

\section{Conclusion}

In this work, we develop a novel MARL method called \textbf{Co}ordinated \textbf{P}olicy \textbf{O}ptimization (CoPO) to incorporate social psychology principle to learn neural controller of SDP systems.
Experiments on microscopic traffic simulation show that the proposed method can learn populations which achieve superior performance over the previous MARL methods in three general-purpose metrics. Interestingly, we find the trained vehicles exhibits complex and socially compliant behaviors that improve the efficiency and safety of the population as a whole.

\section{Social Impact and Limitations}

This work could help the development of intelligent transportation systems which would have a wide impact on society. 
We can analyze the emergent behaviors of the traffic under different scene structures and optimize the road structure or traffic light control to improve the traffic efficiency.
Moreover, CoPO can simulate diverse traffic flow, which can be used to benchmark the generalizability of the autonomous driving systems. 
Besides simulating traffic flow, the proposed method is applicable for pedestrian crowd simulation to study social crowd behaviors as well as potential human stampedes and crushes.

This work focuses on decision-making problem, so we simplify the perception of the vehicles as one-channel LiDAR and assume the acquisition of accurate sensory data. 
In reality, the precise perception of the surroundings in self-driving remains very challenging. 
This work adopts 5 typical traffic scenarios in a simulator as the testbed, which are still far from emulating the complexity of real-world traffic scenes. 
In the future we will import real-world HD maps in the simulator to create more realistic scenarios.

\begin{ack}
This project was supported by the Centre for Perceptual and Interactive Intelligence (CPII) Ltd under the Innovation and Technology Fund.
\end{ack}

{
\small
\bibliography{root.bib}
\bibliographystyle{plain}
}

\newpage
\appendix
\section*{Appendix}



\section{The procedure of CoPO}

Algorithm~\ref{algo:copo} describes the overall procedure of CoPO. 

\RestyleAlgo{ruled}
\IncMargin{1em}
\begin{algorithm}[H]
\caption{The Procedure of CoPO}
\label{algo:copo}
\SetAlgoLined
\LinesNumbered
\DontPrintSemicolon
\KwIn{
Maximal iterations $N$; Environmental horizon $T$; Number of training epochs in each iteration $K_p$, $K_\Phi$ for updating policy and $\Phi$, respectively; Number of mini batches in each epoch $M$.
}
Initialize environment, data buffer $D$, policy network $\pi_\theta$, individual value network $V^I_\eta$, neighborhood value network $V^N_\psi$, global value network $V^G_\omega$, LCF distribution $\mathcal N(\phi_\mu, \phi_\sigma^2 )$. ~\\

\For{$I=1, ..., N$}{

Clear buffer $D$ and store current policy $\pi_{\text{old}} \gets \pi_\theta$. ~\\
\For{$t=0, ..., T$}
{
Assign each newly spawned agent $i$ with $\phi_i \sim \mathcal N(\phi_\mu, \phi_\sigma^2 )$.~\\
$\mathbf a_t = [a_{1,t}, ..., a_{{k_t},t}], a_{i,t} \sim \pi_\theta(\cdot|o_{i,t}), \forall i=1,..., k_t$.\\
Step the environment with $\mathbf a_t$ and store the tuple $(s_t, \mathbf a_t, r_{1,t}, ..., o_{1,t}, ...)$ to $D$.
}
\For{$k=1,...,K_p$}{
Shuffle $D$ and slice it into $M$ batches.~\\
Update $\pi_\theta$, $V^I_\eta$, $V^N_\psi$ and $V^G_\omega$ following Eq.~\ref{eq:coordinated-objective} for each mini batch.~\\
}
\For{$k=1,...,K_\Phi$}{
Update $\Phi$ following Eq.~\ref{eq:meta-objective} with $D$. ~\\
}
}
\end{algorithm}

\section{Proof of global objective factorization}

The global objective
$
J^{G}(\theta_1, \theta_2, ...) 
= 
\expect_{\xi} [\sum_{i\in \mathcal E_\xi} \sum_{t = t^s_i}^{t^e_i} r_{i, t} ]
=
\expect_\xi
[\sum_{t=0}^{T} \sum_{i \in \mathcal E_{\xi,t}} r_{i, t}]
$
can not be directly optimized when computing the policy gradients of individual policies' parameters. This is because when $t < t_i^s$ or $t>t_i^e$, the agent $i$ does not exist in the environment and therefore no samples $(s, a_i, \mathbf a_{\neg i})$ are there. Alternatively, we look for the factorization of the global objective such that optimizing it for each policy can converge to the optimal global objective. 

We use the average rewards of all active agents at each time step as the individual reward for each agent. Concretely, we use the following \textit{individual global objective}:
\begin{equation}
J^{G}_i(\theta_i | \theta_1, ...)
=
\expect_{\xi}[
\sum_{t=t_i^s}^{t_i^e} \cfrac{\sum_{j \in \mathcal E_{\xi,t}}  r_{j, t}}{|\mathcal E_{\xi,t}|}].
\end{equation}

We justify our choice by the following proposition and the proof:
\begin{proposition}[Global objective factorization]
Supposing each agent $i \in \mathcal E$ can maximize its individual global objective $J^G_i$, then the original global objective $J^{G}$ is maximized.	
\end{proposition}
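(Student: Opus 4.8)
The plan is to prove the stronger structural identity $J^{G} = \sum_{i \in \mathcal E} J^{G}_i$ and then deduce the maximization claim from it by a one-line linearity argument. All the content of the proposition lives in this decomposition: once the global objective is literally the sum of the individual global objectives, any parameter configuration that maximizes every summand must maximize the sum.

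First I would establish the identity by working inside the expectation for a fixed trajectory $\xi$ and exchanging the order of summation. Write the per-step global reward as $r^G_t = \frac{\sum_{j \in \mathcal E_{\xi,t}} r_{j,t}}{|\mathcal E_{\xi,t}|}$, which depends on $t$ (and on $\xi$) but not on the index $i$ of the agent whose lifetime we are summing over. Then
\[
\sum_{i \in \mathcal E_\xi} \sum_{t = t^s_i}^{t^e_i} r^G_t = \sum_{i \in \mathcal E_\xi} \sum_{t=0}^{T} \mathbf 1[t^s_i \le t \le t^e_i]\, r^G_t = \sum_{t=0}^{T} r^G_t \sum_{i \in \mathcal E_\xi} \mathbf 1[i \in \mathcal E_{\xi,t}].
\]
The crucial counting step is the equivalence $\mathbf 1[t^s_i \le t \le t^e_i] = \mathbf 1[i \in \mathcal E_{\xi,t}]$, so the inner sum counts exactly the agents alive at step $t$, namely $\sum_{i \in \mathcal E_\xi} \mathbf 1[i \in \mathcal E_{\xi,t}] = |\mathcal E_{\xi,t}|$. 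Substituting cancels the normalizing denominator:
\[
\sum_{t=0}^{T} r^G_t \,|\mathcal E_{\xi,t}| = \sum_{t=0}^{T} |\mathcal E_{\xi,t}| \cdot \frac{\sum_{j \in \mathcal E_{\xi,t}} r_{j,t}}{|\mathcal E_{\xi,t}|} = \sum_{t=0}^{T} \sum_{j \in \mathcal E_{\xi,t}} r_{j,t}.
\]
Taking $\expect_\xi$ of both sides, the left-hand side becomes $\sum_{i \in \mathcal E} J^G_i$ (agents absent from $\xi$ contribute an empty lifetime and hence zero), while the right-hand side is exactly the second form of $J^G$, giving the identity $J^G = \sum_{i \in \mathcal E} J^G_i$.

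With the identity in hand, the maximization claim follows immediately. Interpreting the hypothesis as the existence of a common configuration $\theta^\star = (\theta^\star_1, \theta^\star_2, \dots)$ that maximizes every $J^G_i$ simultaneously, we have $J^G_i(\theta^\star) \ge J^G_i(\theta)$ for all $i$ and all $\theta$; summing over $i$ and applying the identity yields $J^G(\theta^\star) = \sum_i J^G_i(\theta^\star) \ge \sum_i J^G_i(\theta) = J^G(\theta)$, so $\theta^\star$ maximizes $J^G$.

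I expect the main obstacle to be conceptual rather than computational: the identity $J^G = \sum_i J^G_i$ is an algebraic rearrangement whose only delicate point is the exact cancellation of $|\mathcal E_{\xi,t}|$, but the logical step from \emph{``each $J^G_i$ is maximized''} to \emph{``$J^G$ is maximized''} silently requires that a single configuration maximize all individual objectives at once. Since these objectives share the trajectory distribution induced by all policies, they could in principle conflict, and the decomposition alone does not rule this out. I would therefore state explicitly that the proposition presumes such a joint maximizer is attainable, and note that this is reasonable in the parameter-sharing regime used by the algorithm, where the agents are exchangeable and the per-agent objectives are structurally identical, rather than burying the assumption inside the linearity step.
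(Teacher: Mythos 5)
Your proof is correct and takes essentially the same route as the paper: both establish the exact decomposition $J^G = \sum_i J^G_i$ by exchanging the order of summation so that the $|\mathcal E_{\xi,t}|$ factors cancel, and then conclude by monotonicity of the sum in its summands (the paper phrases this as $\partial J^G/\partial J^G_i \ge 0$, citing the QMIX idea, which is your linearity argument in different notation). Your closing observation --- that the hypothesis silently presumes a single parameter configuration maximizing all $J^G_i$ simultaneously, which is plausible under parameter sharing but not guaranteed by the decomposition itself --- is a genuine subtlety the paper's proof leaves implicit, and it is worth stating.
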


\begin{proof}
We can easily show that:
\begin{equation}
\begin{aligned}
J^{G} (\theta_1, \theta_2, ...)
& =
\expect_\xi
[\sum_{t=0}^{T} \sum_{i \in \mathcal E_{\xi,t}} r_{i, t}] ~\\
& =
\expect_\xi
[
    \sum_{t=0}^{T}
        \sum_{j\in \mathcal E_{\xi, t}}  
        \cfrac{1}{|\mathcal E_{\xi,t}|}
        \sum_{i \in \mathcal E_{\xi,t}}  r_{i, t}
] ~\\
& =
\expect_\xi
[
\sum_{t=0}^{T}\sum_{i\in \mathcal E_{\xi, t}}  \cfrac{\sum_{j \in \mathcal E_{\xi,t}}  r_{j, t}}{|\mathcal E_{\xi,t}|}
] 
~\\
& =
\expect_\xi 
[\sum_{i=1}^{|\mathcal E_{\xi}|} \sum_{t=t_i^s}^{t_i^e} \cfrac{\sum_{j \in \mathcal E_{\xi,t}}  r_{j, t}}{|\mathcal E_{\xi,t}|}
] ~\\
& =
\sum_{i} J^{G}_i(\theta_i | \theta_1, ...) 
\end{aligned}
\end{equation}

Since $J^G(\theta_1, \theta_2, ...)$ is the summation of $J^G_i(\theta_i)$, 
following the idea of \cite{rashid2018qmix}, we have:
\begin{equation}
    \cfrac{\partial J^G}{\partial J^G_i} \ge 0, \forall i=1, ....
\end{equation}
This shows that increasing the individual global objective of each agent can increase the global objective.
Therefore, 
\begin{equation}
\mathop{\arg \max}_{\theta_i, \forall i} J^{G}(\theta_1, \theta_2, ...)
=
\arg\max_{\theta_1, ...}\sum_{i} J_i^G(\theta_i | \theta_1, ...)
\end{equation}
is hold and maximizing the individual global objective of each agent is equivalent to maximizing the global objective.
\end{proof}

\section{Details and comparison of the driving simulator}

In all five environments proposed in this work,
we initialize a given number of vehicles in random spawn points, and assign a destination to each of them randomly. 
Agents are terminated in three situations: reaching the destinations (called \textit{success}), crashing with others or driving out of the road (called \textit{failure}).
The new vehicles spawn immediately after existing agents are terminated. Meanwhile, the terminated vehicles remains static for 10 steps (2s) in the original positions, creating impermeable obstacles. Crashing to dead vehicles is also considered as failure. 
This design enables the total number of vehicles to vary in time and exceed the the initial number of agents, which mimics real-world situation and brings more challenges.

The local observation of each vehicle contains (1) current states such as the steering, heading, velocity and relative distance to boundaries \textit{etc.}, (2) the navigation information that guides the vehicle toward the destination, and (3) the surrounding information encoded by a vector of 72 Lidar-like distance measures of the nearby vehicles. Vehicles are controlled by continuous steering and acceleration signals. The reward function only contains a dense driving reward for longitudinal movement to the destination and a sparse reward that incentives or penalizes the terminations.

MetaDrive~\cite{li2021metadrive} is a lightweight and efficient driving simulator implemented based on Panda3D~\cite{goslin2004panda3d} and Bullet Engine.
Panda3D is an open-source engine for real-time 3D games, rendering, and simulation. Its well designed rendering capacity enables our simulator to construct realistic monitoring and observational data.
Bullet Engine is a physics simulation system that supports advanced collision detection, rigid body motion, and kinematic character controller, which empowers accurate and efficient physics simulation. 
Empowered by the engine and our optimized implementation, our simulator can achieve high efficiency. In the Roundabout environment, where averagely 40 vehicles runs concurrently, our simulator achieves $\sim$50 FPS on PC during training. In the less populated environment such as the Parking Lot, where averagely 10 vehicles are running, our simulator can achieves $\sim$150 FPS.
Our simulator will be open-sourced.

Apart from ours, there are lots of existing driving simulators that support RL research.
The simulators CARLA~\cite{Dosovitskiy17}, GTA V~\cite{martinez2017beyond}, and SUMMIT~\cite{cai2020summit} realistically preserve the appearance of the physical world. 
For example, CARLA not only provides perception, localization, planning, control modules, and sophisticated kinematics models, but also renders the environment in different lighting conditions, weather and the time of a day shift. Thus the driving agent can be trained and evaluated more thoroughly.
Other simulators such as TORCS \cite{wymann2000torcs}, Duckietown~\cite{gym_duckietown} and Highway-env \cite{highway-env} abstract the driving problem to a higher level or provide simplistic scenarios as the environments for the agent to interact with.
However, the aforementioned simulators are majorly designed for single-agent scenario, wherein the traffic vehicles are controlled by predefined models or heuristics.

In the MARL context, CityFlow~\cite{zhang2019cityflow} and FLOW~\cite{wu2017flow} are two macroscopic traffic simulators that based on SUMO~\cite{SUMO2018}.
However, since these two simulators focus on different aspect of simulating the traffic system, they are not suitable to investigate the detailed behaviors of each learning-based agents.

MACAD~\cite{palanisamy2020multi} provides high-fidelity rendering as the observation based on CARLA. In our preliminary experiments, we have tried to use first-view camera to generate images as the observations, but we find that to be inefficient. So in this paper, we use the scalar states as well as Lidar-like measures as the observation for each agent, which also improves the efficiency of our simulator.

SMARTS~\cite{zhou2020smarts} is a similar simulator to ours. 
According to the efficiency test\footnote{https://github.com/huawei-noah/SMARTS/issues/47}, 25 FPS is achieved when 10 agents are running with the scalar state observation as input. However, in our 10-agents environment Parking, our simulator can achieve 150 FPS on single PC even with the Lidar-like observations are feeding to each agent. 
Besides, our environments cover the major atomic scenes in SMARTS by creating several complex scenes that includes rich driving situations in each scene. For example, in the Tollgate environment, the agents need to learn not only interacting with others, but also interacting with the road infrastructure, the tollgates. They need to learn queueing and patient waiting in the tollgate until being allowed to pass.

The simulator used in~\cite{tang2019towards} is not open-sourced, while our simulator and training code will be available to public. Besides, they only use one scene with relatively fewer vehicles.
In \cite{pal2020emergent}, the environments are relatively simpler than ours, and the density of traffic are lower. Besides, using the low-level control in our work allowing the RL agents to apply continuous actuation to vehicles directly and learn more diverse behaviors. For example in the Intersection environment, though we do not invite traffic signals as in \cite{pal2020emergent}, the negotiation and other social behaviors naturally emerge as the outcome of the proposed CoPO algorithm.

In short, the driving simulator used in this work can run more flexibly and efficiently with complex environments and dense traffic flow.

\section{Implementation details}

Table~\ref{tab:hyper-parameters} summarizes the hyper parameters used in our experiments. 
\begin{table}[H]
\centering
\caption{Hyper-parameters}
\label{tab:hyper-parameters}
\begin{tabular}{@{}ll@{}}
\toprule
Hyper-parameter             & Value  \\ \midrule
KL Coefficient              & 1.0    \\
$\lambda$ for GAE~\cite{schulman2015high} & 0.95 \\
$\gamma$ for global value estimation  & 1.0 \\
$\gamma$ for individual / neighborhood value estimation  & 0.99 \\
Environmental steps per training batch & 1024 \\
Number of SGD epochs $K_p$   & 5     \\
SGD mini batch size & 512 \\
Learning Rate               & 0.0003 \\ 
Environmental horizon $T$ & 1000 \\
Neighborhood radius $d_n$   & 10 meters \\
Number of random seeds                & 8 \\  
Maximal environment steps for each trial & 1M \\
\midrule
LCF learning rate                & 0.0001 \\
LCF number of SGD epochs    $K_{\Phi}$            & 5 \\
LCF distribution initial STD $\phi_\sigma$ & 0.1 \\
LCF distribution initial mean $\phi_\mu$ & 0.0 \\
\bottomrule
\end{tabular}
\end{table}
Table~\ref{table:summary-of-neural-networks} summarizes the components used in CoPO. We use three value networks to estimate different cumulative rewards. Note that we do not apply centralized critics since its performance is unstable in SDP-like system due to the variation of local neighborhood. We also leverage a distributional LCF. The reason why we do not use a neural network to predict step-wise $\phi$ is that we consider the local coordination is a episode-level problem. 
A network predicting step-wise $\phi$ will introduce huge noise in the coordinated objective, which leads to undesired solutions. 
Finally, since we use parameter sharing~\cite{terry2021revisiting} among all agents, the extra neural networks introduced by CoPO will not scale up with the increasing number of agents. 
\begin{table}[H]
\centering
\begin{small}
\begin{tabular}{cccc}
\toprule
Name & 
\begin{tabular}[c]{@{}c@{}}Input\\Dimension\end{tabular} & 
\begin{tabular}[c]{@{}c@{}}Output\\Dimension\end{tabular} &
Usage
\\
\midrule
Policy network & $|\mathcal O_i|$ & $|\mathcal A_i|\times 2$ & 
\begin{tabular}[c]{@{}c@{}}Generate the mean and STD of \\    action distribution to sample actions.\end{tabular}
 \\ 
Individual value network & $|\mathcal O_i|$ & $1$ & 
\begin{tabular}[c]{@{}c@{}}Predict the value w.r.t. individual reward.\end{tabular}
\\ 
Neighborhood value network & $|\mathcal O_i|$ & $1$ & \begin{tabular}[c]{@{}c@{}}Predict the value w.r.t. neighborhood reward.\end{tabular} \\ 
Global value network & $|\mathcal O_i|$ & $1$ & 
\begin{tabular}[c]{@{}c@{}}Predict the value w.r.t. global reward.\end{tabular}
 \\ 
 \begin{tabular}[c]{@{}c@{}}
Local coordination factor\\distribution parameters
\end{tabular}
 & $0$ & $2$ & 
\begin{tabular}[c]{@{}c@{}}
The trainable $\phi_\mu$ and $\phi_\sigma$ of the social factor.
\end{tabular}
 \\
\bottomrule
\end{tabular}
\end{small}
\caption{The neural networks and the model parameters in CoPO.}
\label{table:summary-of-neural-networks}
\end{table}

\section{Learning curves}

\begin{figure}[H]
    \centering    
    \hfill%
    \includegraphics[width=\linewidth]{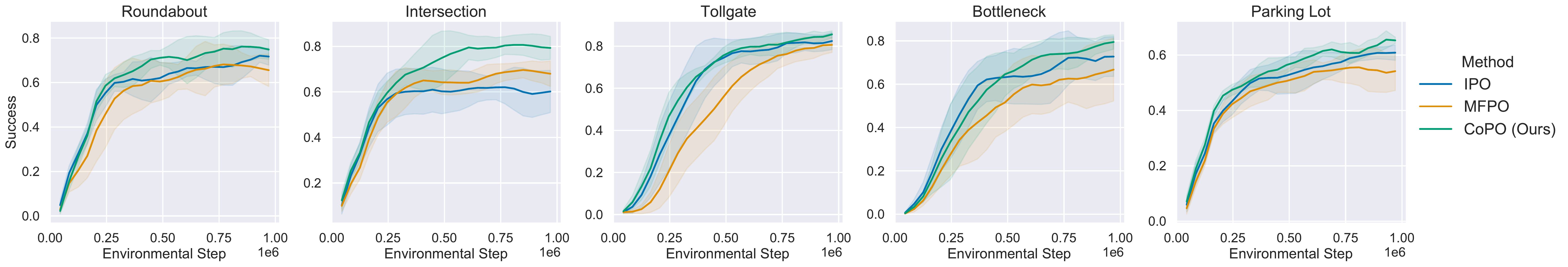}
    \hfill%
   	\caption{The learning curves of success rate of the populations trained by three approaches.}
    \label{fig:success-rate}
\end{figure}

\begin{figure}[H]
    \centering    
    \hfill%
    \includegraphics[width=\linewidth]{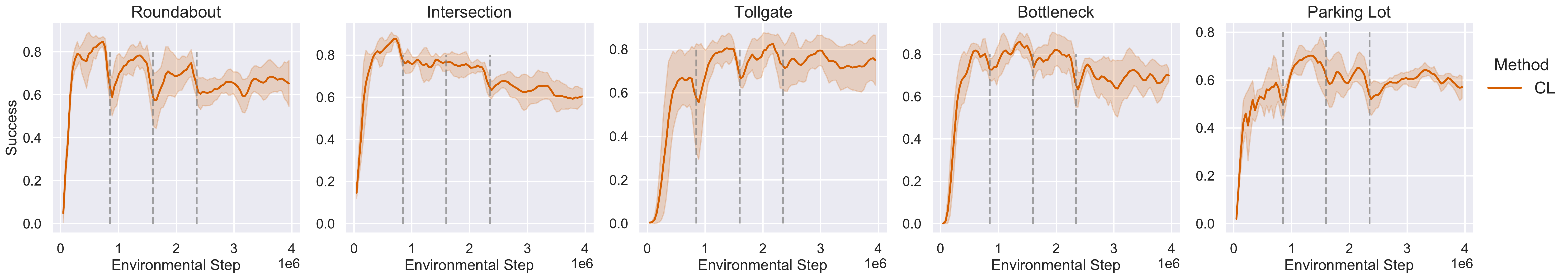}
    \hfill%
   	\caption{The success rate of populations trained by Curriculum Learning. We split the training into 4 phases with different initial number of agents. The gray lines indicate the time step when we switch the initial number of agents in the environments.}
    \label{fig:success-rate-cl}
\end{figure}

Fig.~\ref{fig:success-rate} and Fig.~\ref{fig:success-rate-cl} plot the success rate curves of different algorithms. CoPO prevails in all environments. In most of the environment, MFPO shows slower convergence, because learning value function becomes harder as the input dimension is increased.

In curriculum learning baseline, we vary the number of agents to $25\%K$, $50\%K$, $75\%K$ and $100\%K$ at different phases, wherein $K$ is the initial number of agents at each episode. We find that increasing initial agents creates instant drops on the success rate, but the learning algorithm will soon cover. In all the environments except the Tollgate, the performance at the beginning of each phase shows clear downgrade. This makes sense since the tasks in later phase become harder. In Tollgate however, the major challenge is the interaction between agents and the environmental infrastructure the tollgates. This might be the reason why the performance at the final phase is better than the one at the first phase in Tollgate. 

\begin{figure}[H]
    \centering    
    \includegraphics[width=\linewidth]{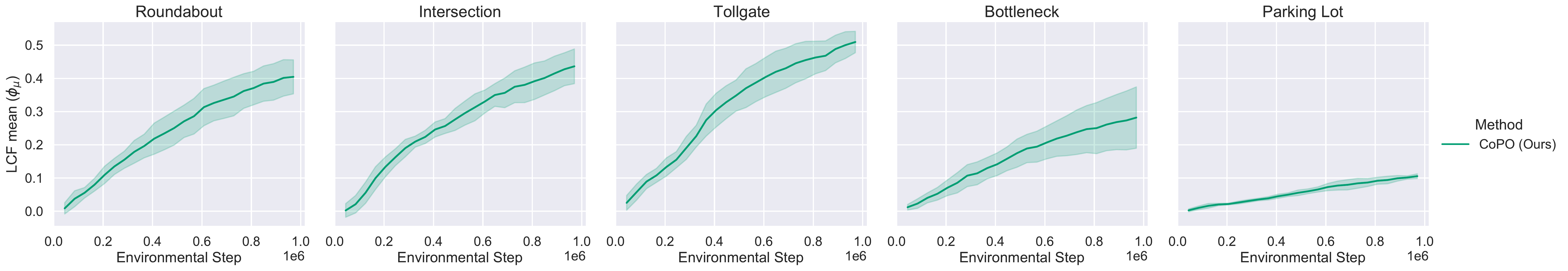}
	\includegraphics[width=\linewidth]{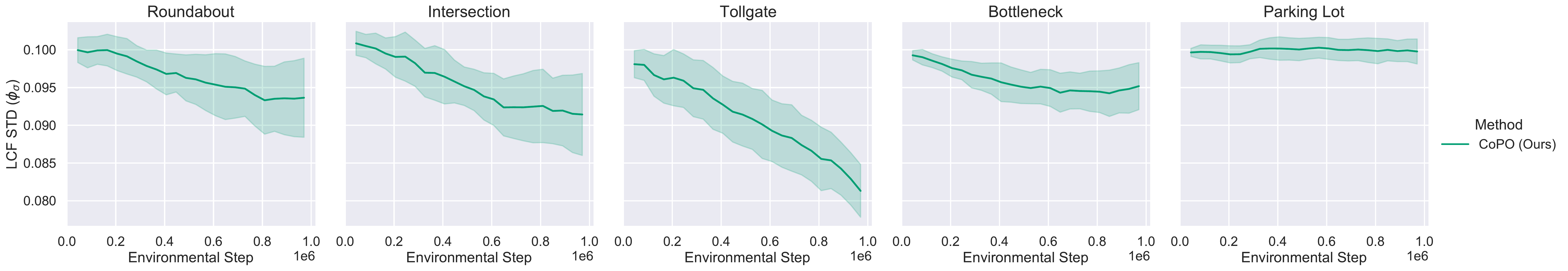}
   	\caption{The change of LCF during training. }
    \label{fig:lcf-training-curve}
\end{figure}

Fig.~\ref{fig:lcf-training-curve} demonstrates the varying of $\phi_\mu, \phi_\sigma$ in the course of learning. We find that CoPO tends to increase LCF gradually, and the increasing speed varies across different environments. Meanwhile, the standard deviation decreases, showing that reducing the noise of LCF in CoPO can improve global reward.

\section{Generalization experiments}

To justify the generalizability of different training schemes, in test time, we use a simple heuristic to control 25\%, 50\% and 75\% vehicles respectively in the scene.
The heuristic we implemented is a rule-based policy which mixes the cruising, lane changing, emergency stopping behaviors with various driving models such as IDM and mobile policy.

In the following table, we can see that introducing heterogeneous policies severely damages the success rate of the IPO population. Noticeably, the CoPO population seems to be mildly affected by IDM policies. This result suggests that the proposed method, due to the distributional LCF in training, can yield more robust policies.

\begin{table}[H]
\centering
\caption{generalization-experiments}
\label{tab:generalization-experiments}
\begin{tabular}{@{}cccccc@{}}
\toprule
Experiment & Roundabout      & Intersection & Tollgate & Bottleneck & Parking Lot       \\ \midrule
IPO (0\% IDM)   & 70.81   {\tiny $\pm$ }     & 60.47    & 82.90      & 72.43       & 61.05 \\
IPO (25\% IDM)  & 55.17        & 58.64    & 43.60      & 65.96       & 56.09 \\
IPO (50\% IDM)  & 52.14        & 57.70    & 42.35      & 62.74       & 54.22 \\
IPO (75\% IDM)  & 50.49        & 54.35    & 40.19      & 64.86       & 49.69 \\
                &              &          &            &             &       \\
CoPO (0\% IDM)  & 73.67        & 78.97    & 86.13      & 79.68       & 65.04 \\
CoPO (25\% IDM) & 71.55        & 78.40    & 85.72      & 74.99       & 62.60 \\
CoPO (50\% IDM) & 63.34        & 77.99    & 85.29      & 69.78       & 58.60 \\
CoPO (75\% IDM) & 71.29        & 82.05    & 84.65      & 69.98       & 38.51 \\ \bottomrule
\end{tabular}
\end{table}

\section{Social behaviors in the trained vehicle population}

\begin{figure}[H]
  \includegraphics[width=\linewidth]{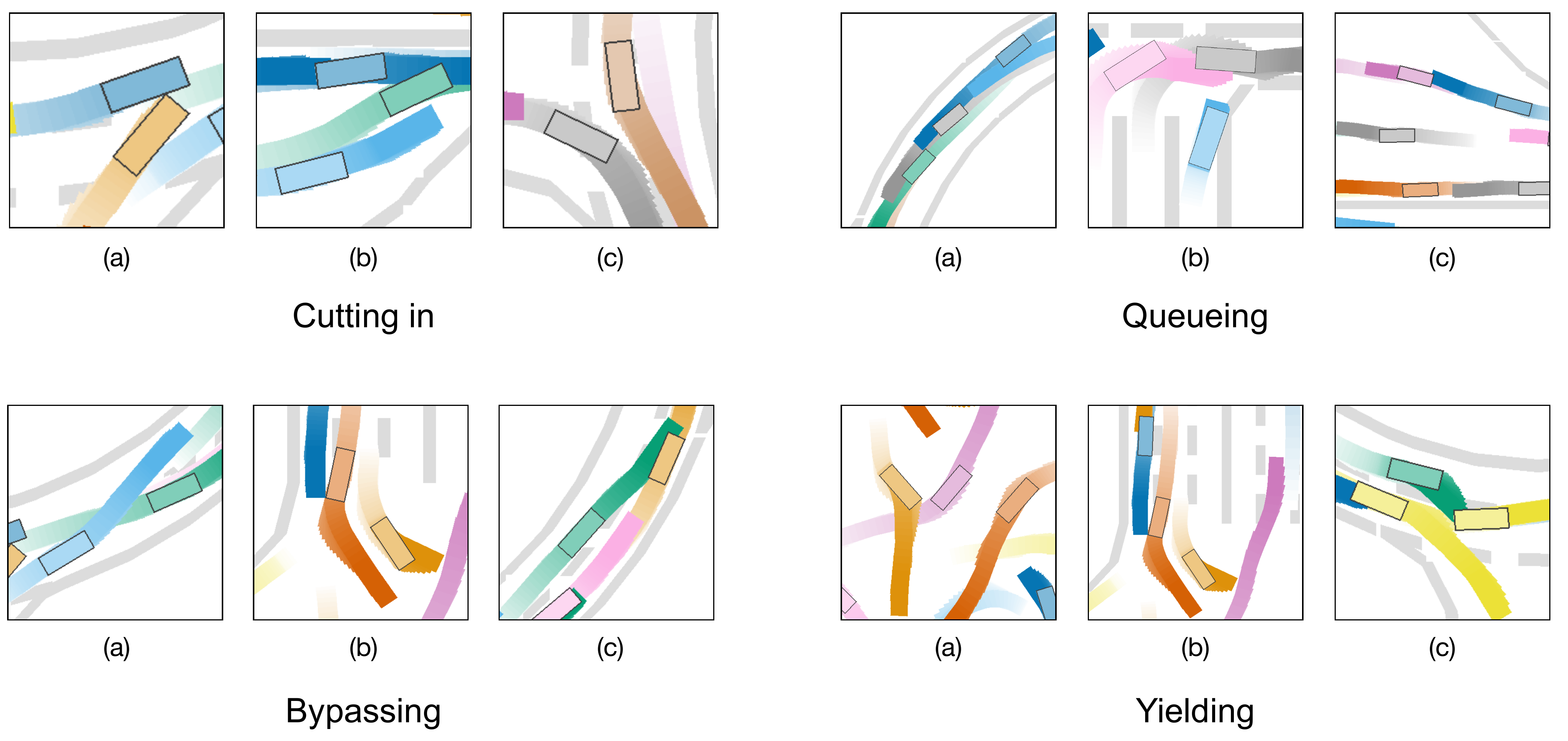}
  \caption{Some social interaction behaviors in the trained population. Following the same visualization method in the main body, we plot the past 25 and future 25 positions of each vehicle with different colors, where the luminosity of a trajectory decreases from light to dark, representing from past positions to future positions.}
  \label{fig:detailed-equalitative-results}
\end{figure}

Fig.\ref{fig:detailed-equalitative-results} demonstrates four typical emergent behaviors in CoPO populations: cutting in, queueing, bypassing, and yielding.

Cutting in is a common competitive behavior of human drivers, referring to that one vehicle rapidly rushes into the potential future trajectory of another vehicle in order to take priority. This is a kind of competitive behavior that only benefits the one who conducts it. It can increase the utility of the vehicle that does the cutting in but probably cause unpleasant sudden braking of other vehicles. In (a), the brown vehicle hopes to cut in the path of the blue vehicle, but the blue car sticks to the path. Therefore the crash happens. In (b), the green vehicle cuts in the path of the top blue vehicle, while the top blue vehicle yields to it. In (c), the gray vehicle takes over the original future lane of the brown vehicle because the brown vehicle gives up its path and changes to another lane.

Queueing is another common behavior, where vehicles line up to wait for passing. In (a) and (b), vehicles queue and wait for moving. In (c), which is the upper right corner of the Tollgate environment, vehicles queue for passing, since other vehicles have to stay inside the tollgate for a while.

Bypassing is a behavior that the drivers use another lane different from the current lane and drive fast to bypass some front vehicles. We find CoPO agents learn the bypassing policy to take over some slow front cars. 
In (b), the middle orange vehicle bypasses the right orange vehicle by slightly changing its direction. The middle vehicle shifts to its right-hand side a little bit and then rushes rapidly, in order to take over the right vehicle. 
Both (a) and (c) exhibit excellent lane-changing skills.

Yielding is one important cooperative behavior. The driver stops or even moves toward another direction to spare space for other vehicles to pass. Yielding harms short-term individual interest since the vehicle has to slow down or stop, but it improves long-term reward because crashing to other terminates the episode. 
The demonstrations in Yielding part of Fig.~\ref{fig:detailed-equalitative-results} contain complex interaction behaviors. 
In (a), three vehicles are negotiating who should go first. 
Note that we use the term ``negotiation'', but in fact, no explicit communication is implemented between agents. They have to respond to others only based on their own local observations. 
In (b), both orange vehicles yield to a pink vehicle. The upper left blue vehicle also yields to the middle orange vehicle when the orange vehicle is bypassing another orange vehicle. 
In (c), the green vehicle conducts a sudden brake, yielding to a yellow vehicle.

The above demonstrations show the complex social behaviors emerged in the CoPO trained populations. 
To better illustrate the macroscopic behaviors of each population, we plot the trajectories of each algorithm in each environment as following figures. \textbf{Please also refer to the video for dynamic visualization. The video is provided in the supplementary material as well as in the website: \url{https://decisionforce.github.io/CoPO/}.}

\begin{figure}[H]
     \centering
     \begin{subfigure}{0.45\linewidth}
         \centering
         \includegraphics[width=\linewidth]{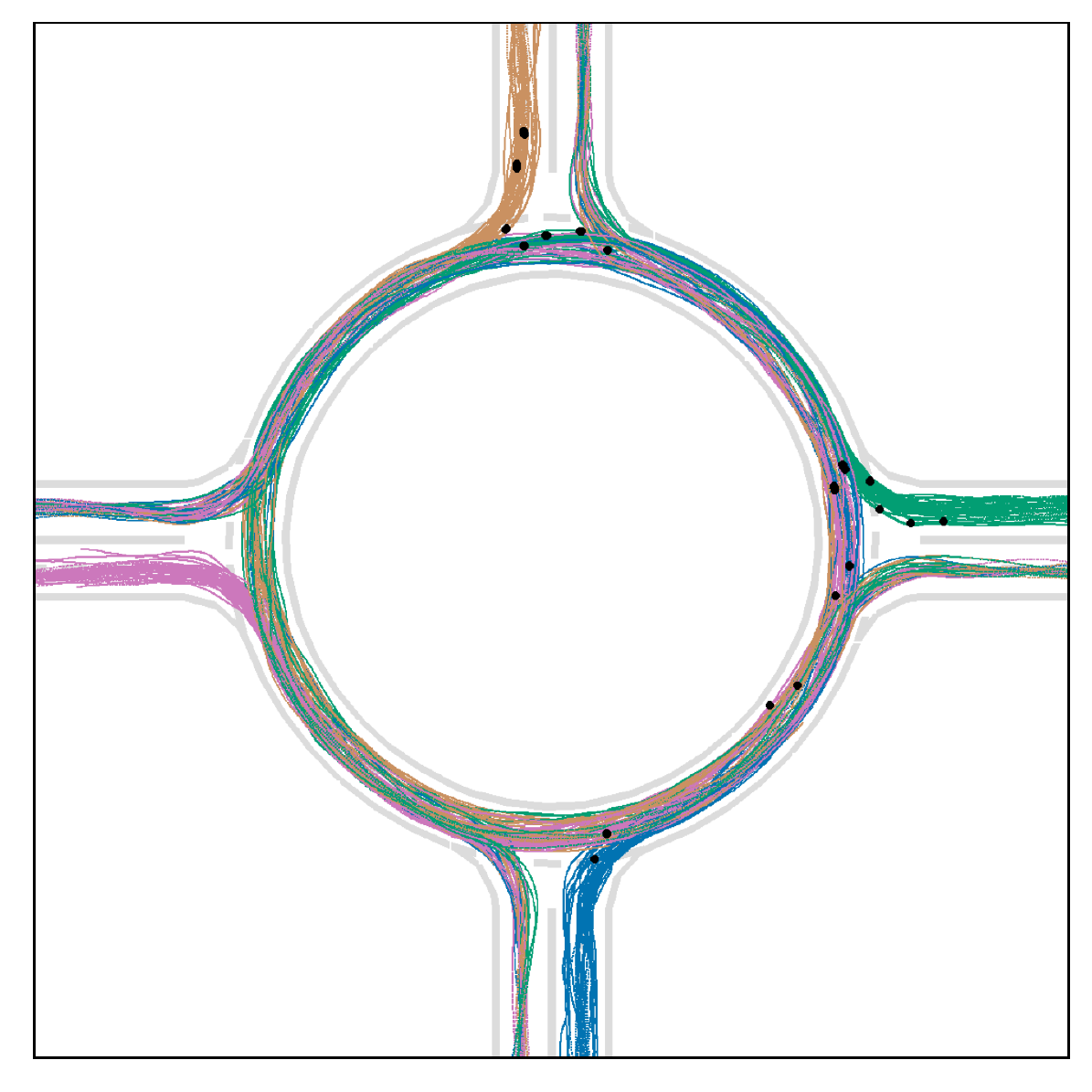}
                  \caption{IPO}
     \end{subfigure}
     \begin{subfigure}{0.45\linewidth}
         \centering
         \includegraphics[width=\linewidth]{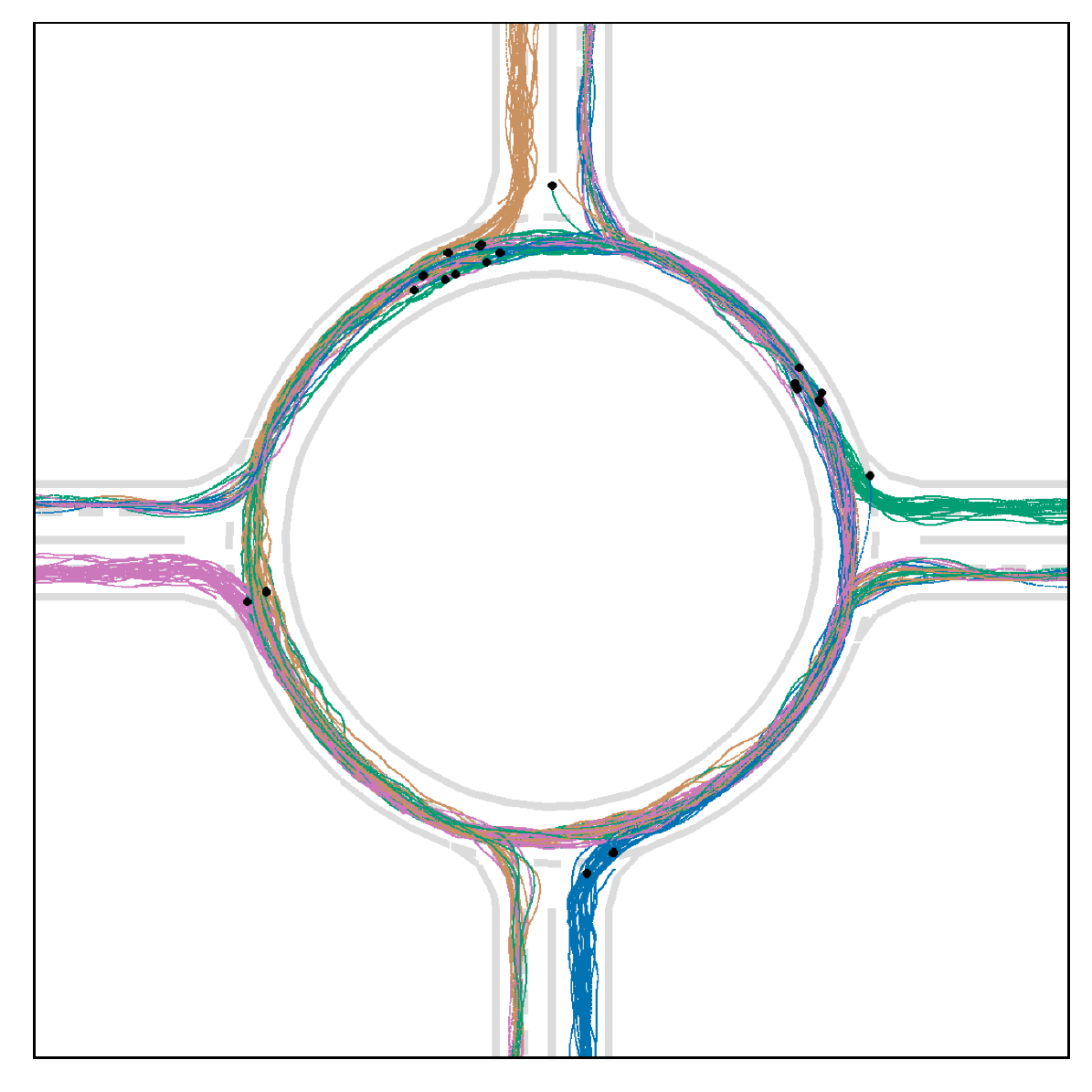}
                  \caption{MFPO}
     \end{subfigure}
          ~\\
     \begin{subfigure}{0.45\linewidth}
         \centering
         \includegraphics[width=\linewidth]{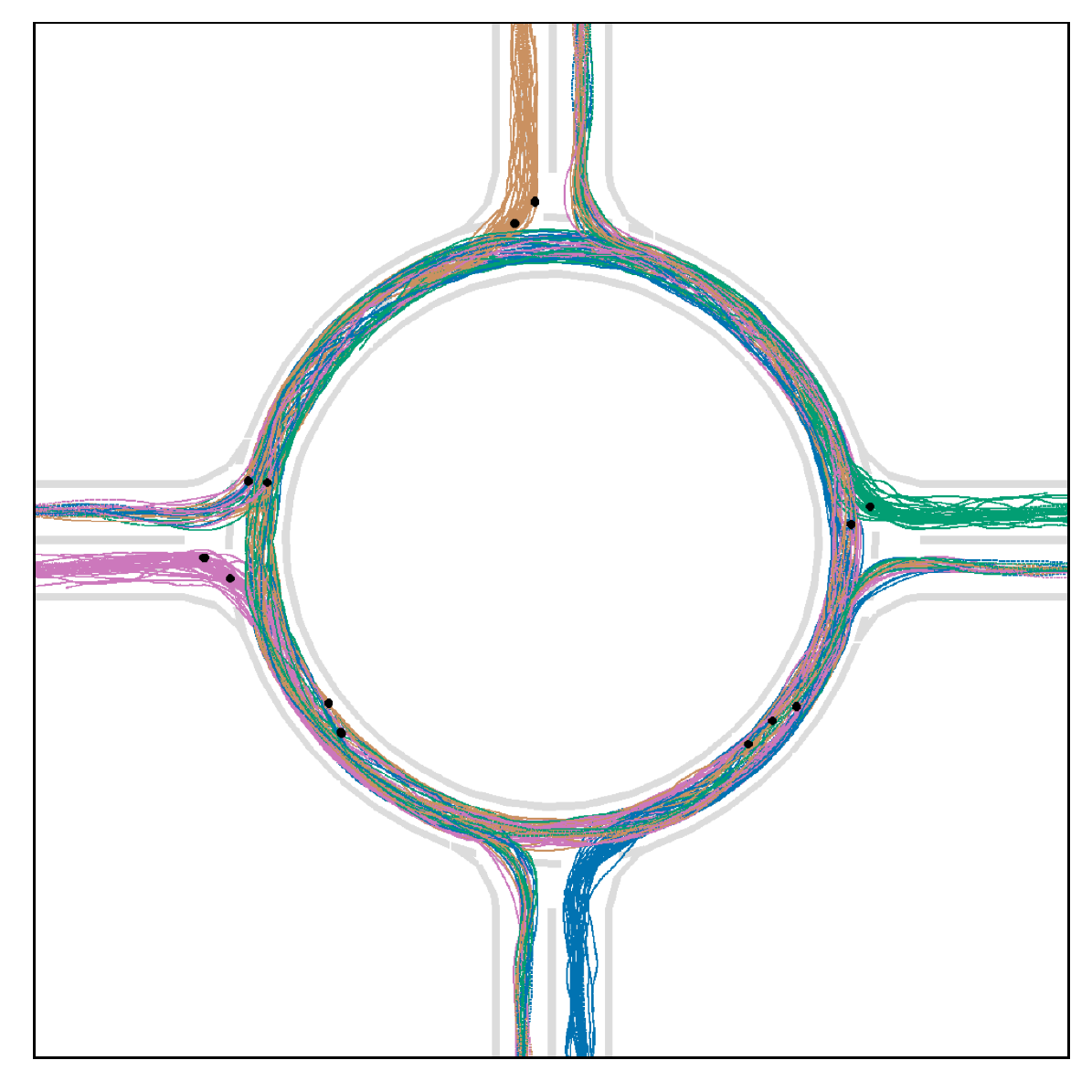}
                  \caption{CL}
     \end{subfigure}
          \begin{subfigure}{0.45\linewidth}
         \centering
         \includegraphics[width=\linewidth]{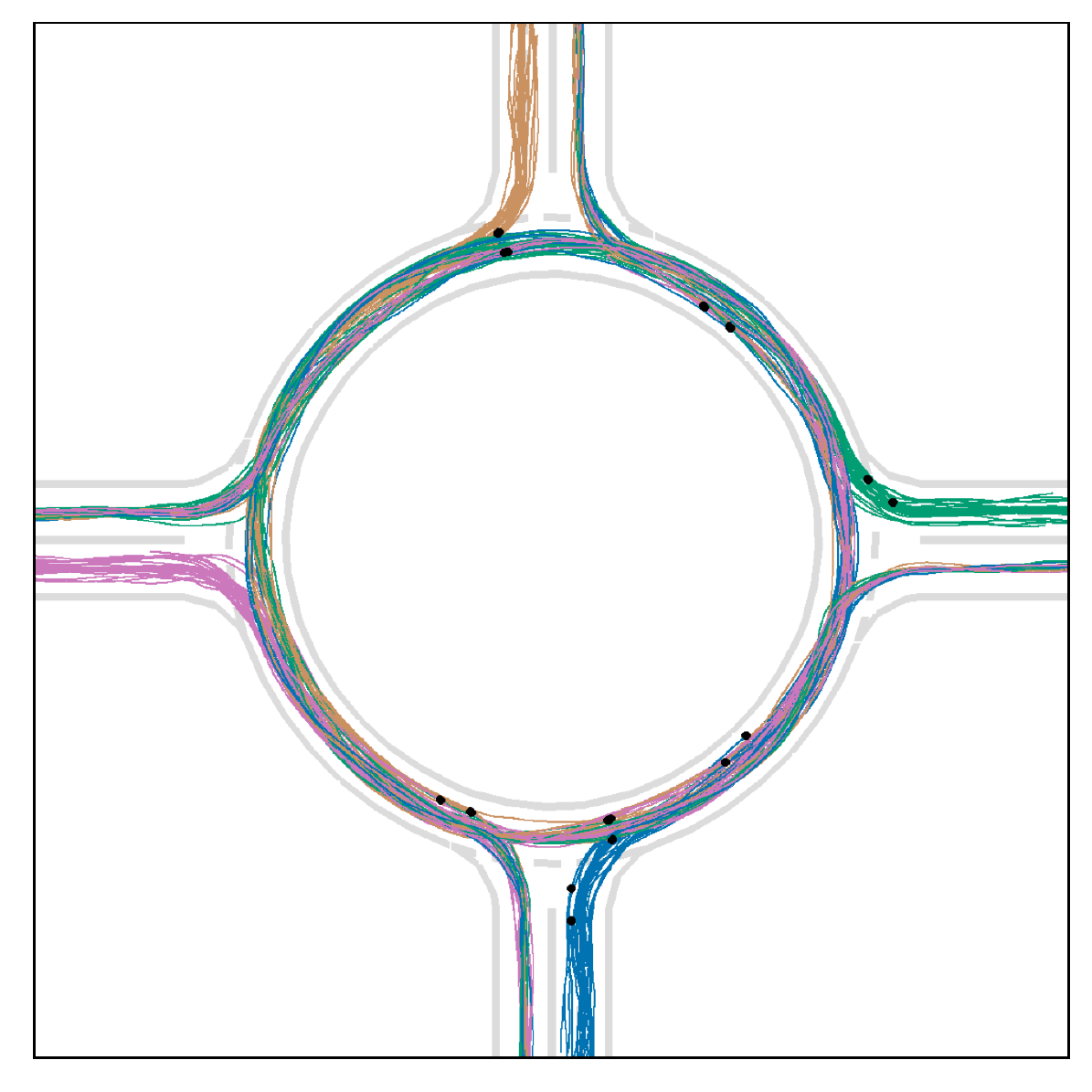}
         \caption{CoPO (Ours)}
     \end{subfigure}
        \caption{
        Plot of trajectories of trained populations in the Roundabout environment.
        }
\end{figure}

\begin{figure}[H]
     \centering
     \begin{subfigure}{0.45\linewidth}
         \centering
         \includegraphics[width=\linewidth]{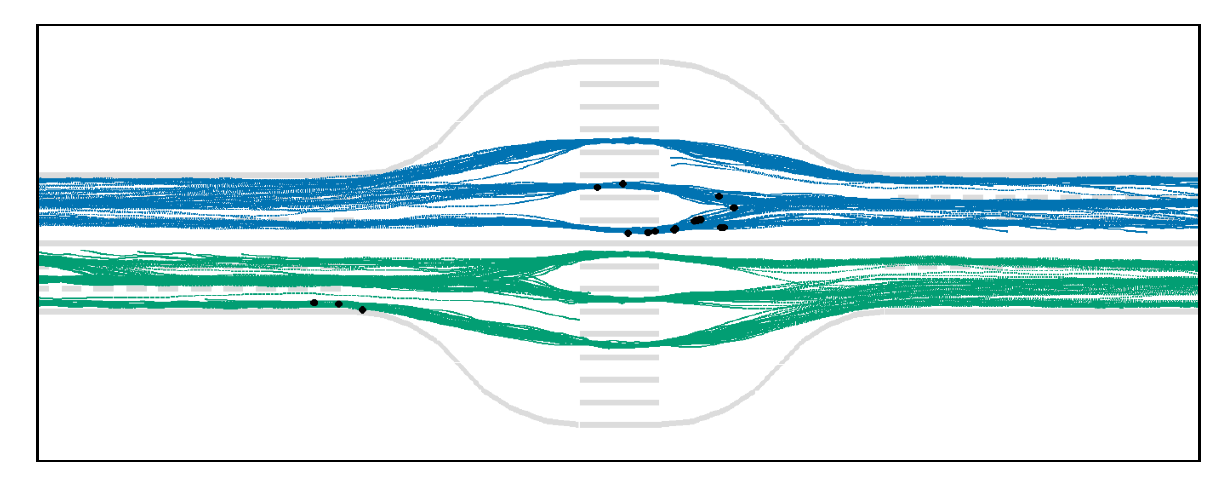}
                  \caption{IPO}
     \end{subfigure}
     \begin{subfigure}{0.45\linewidth}
         \centering
         \includegraphics[width=\linewidth]{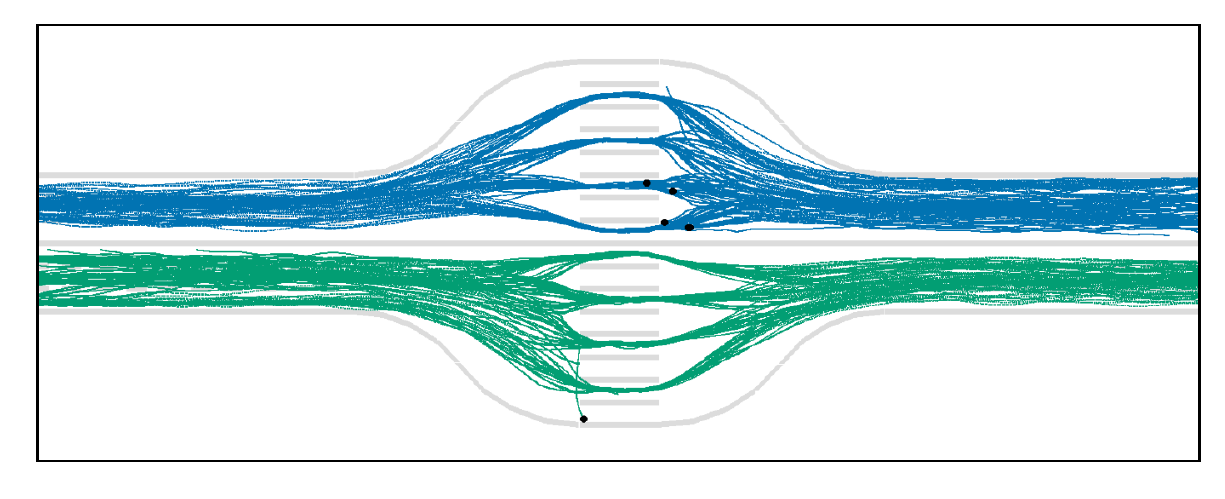}
                  \caption{MFPO}
     \end{subfigure}
          ~\\
     \begin{subfigure}{0.45\linewidth}
         \centering
         \includegraphics[width=\linewidth]{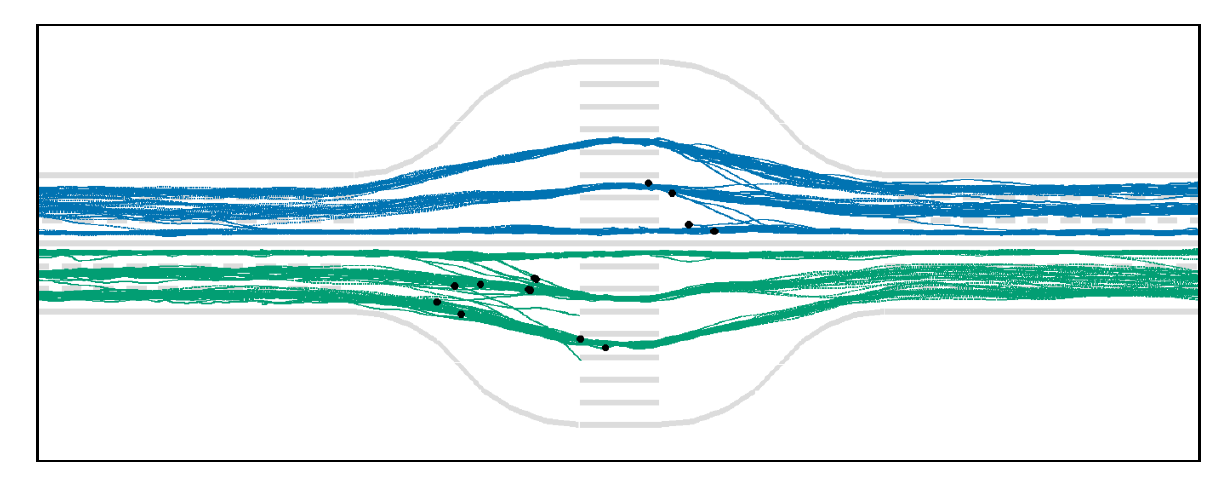}
                  \caption{CL}
     \end{subfigure}
          \begin{subfigure}{0.45\linewidth}
         \centering
         \includegraphics[width=\linewidth]{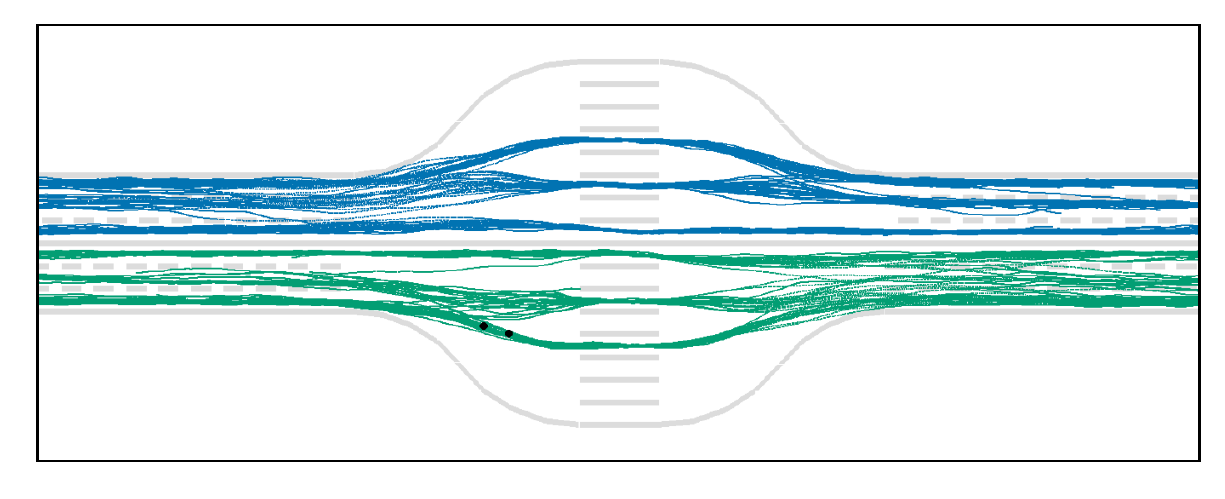}
         \caption{CoPO (Ours)}
     \end{subfigure}
        \caption{
        Plot of trajectories of trained populations in the Tollgate environment.
        }
\end{figure}

\begin{figure}[H]
     \centering
     \begin{subfigure}{0.45\linewidth}
         \centering
         \includegraphics[width=\linewidth]{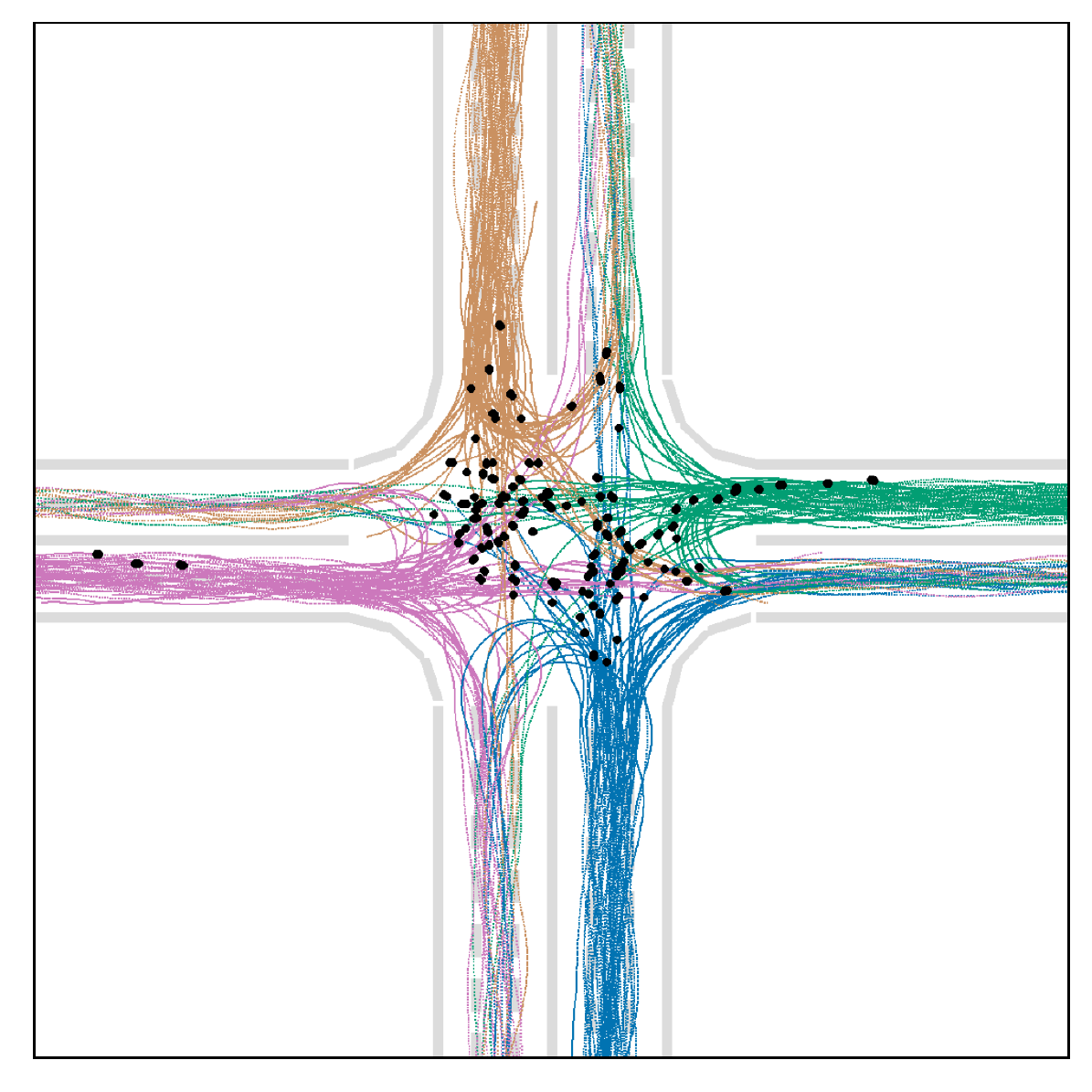}
                  \caption{IPO}
     \end{subfigure}
     \begin{subfigure}{0.45\linewidth}
         \centering
         \includegraphics[width=\linewidth]{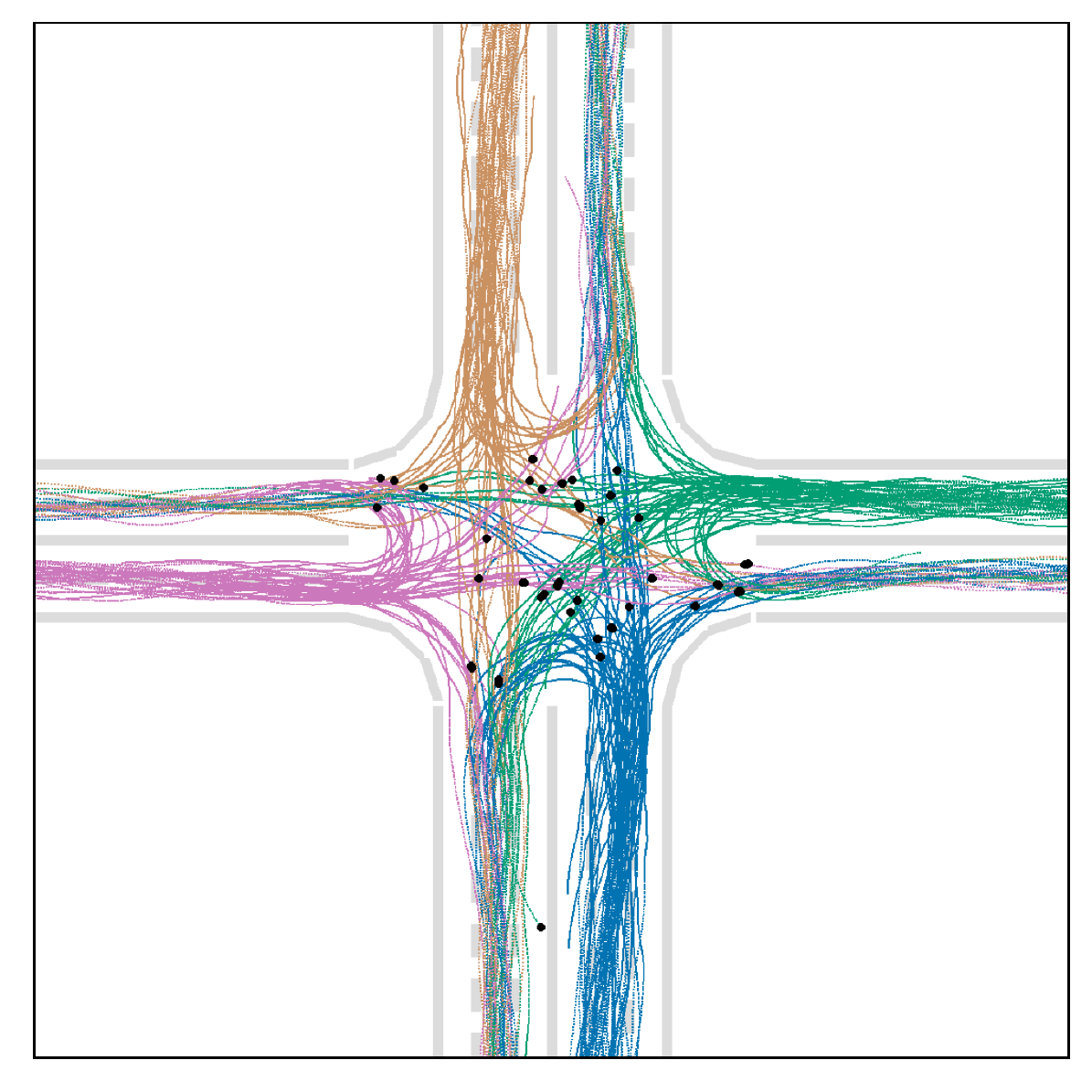}
                  \caption{MFPO}
     \end{subfigure}
          ~\\
     \begin{subfigure}{0.45\linewidth}
         \centering
         \includegraphics[width=\linewidth]{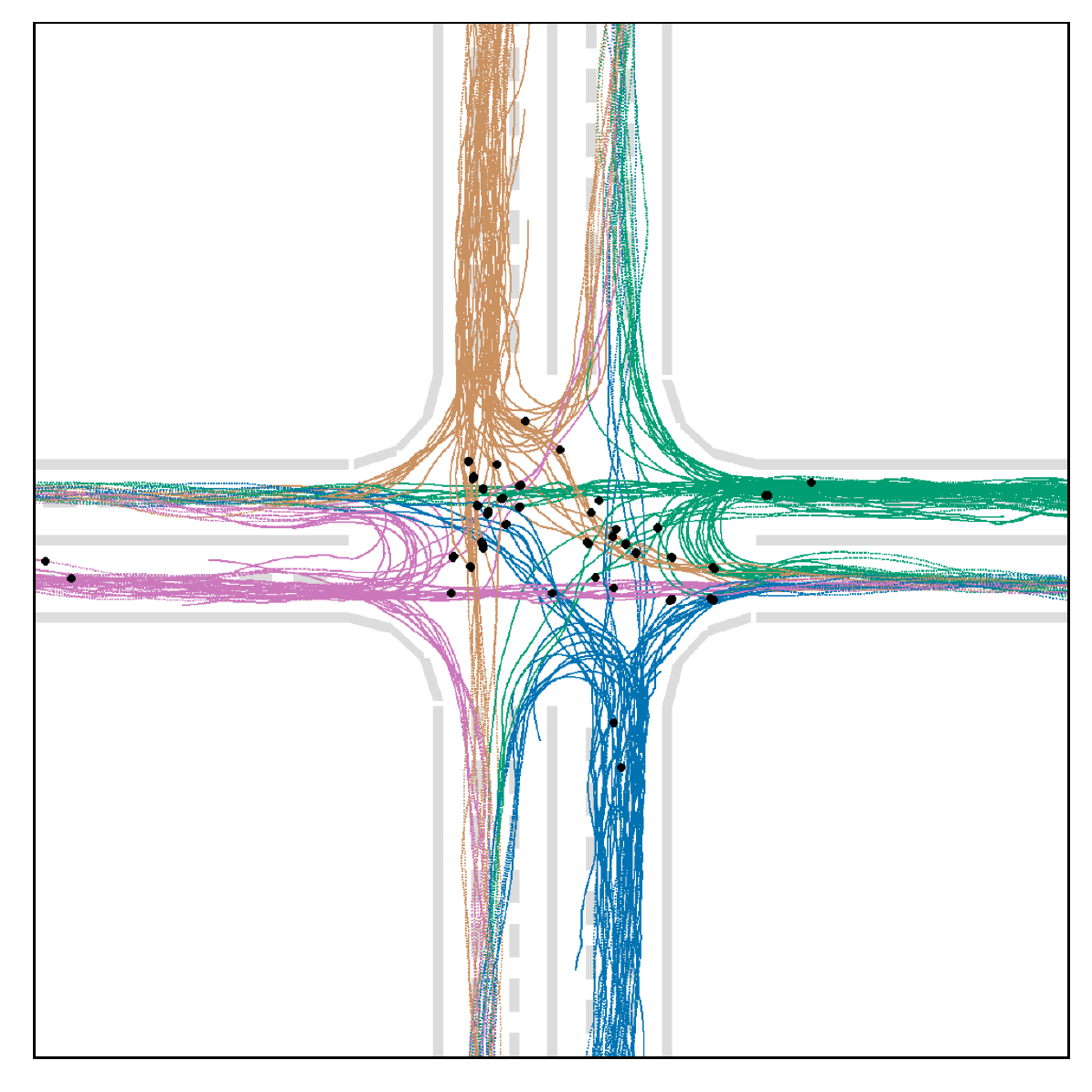}
                  \caption{CL}
     \end{subfigure}
          \begin{subfigure}{0.45\linewidth}
         \centering
         \includegraphics[width=\linewidth]{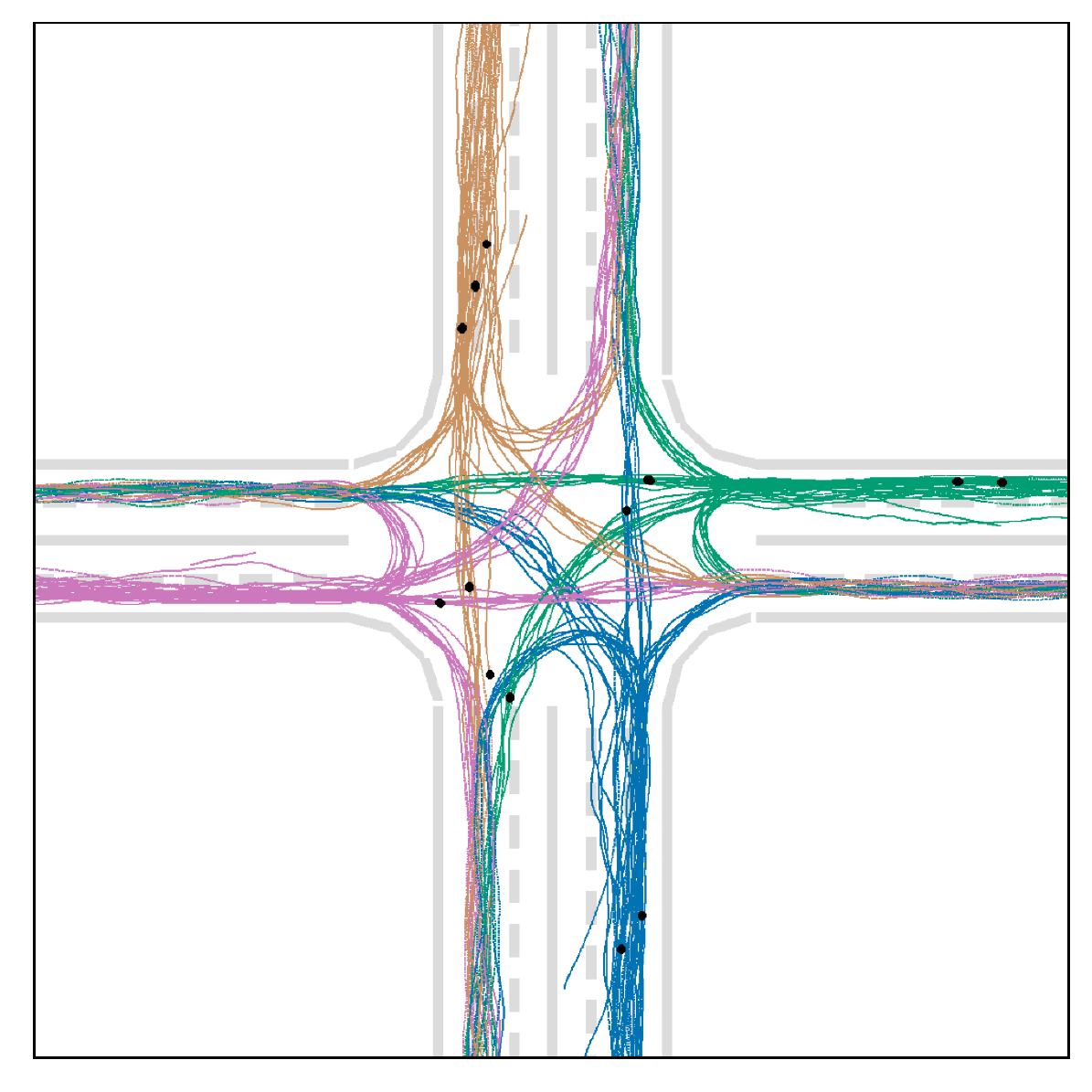}
         \caption{CoPO (Ours)}
     \end{subfigure}
        \caption{
        Plot of trajectories of trained populations in the Intersection environment.
        }
\end{figure}

\begin{figure}[H]
     \centering
     \begin{subfigure}{0.45\linewidth}
         \centering
         \includegraphics[width=\linewidth]{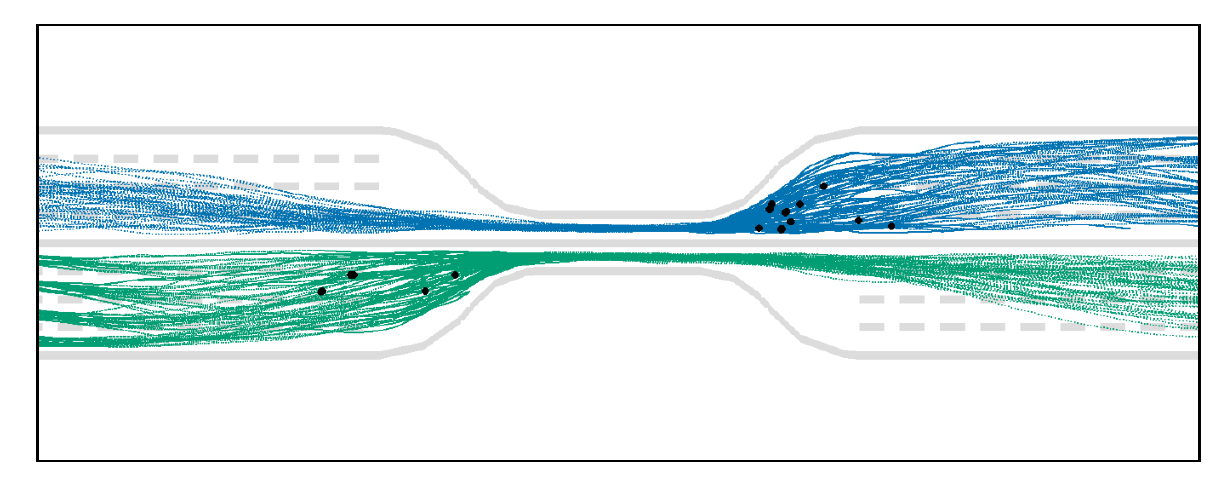}
                  \caption{IPO}
     \end{subfigure}
     \begin{subfigure}{0.45\linewidth}
         \centering
         \includegraphics[width=\linewidth]{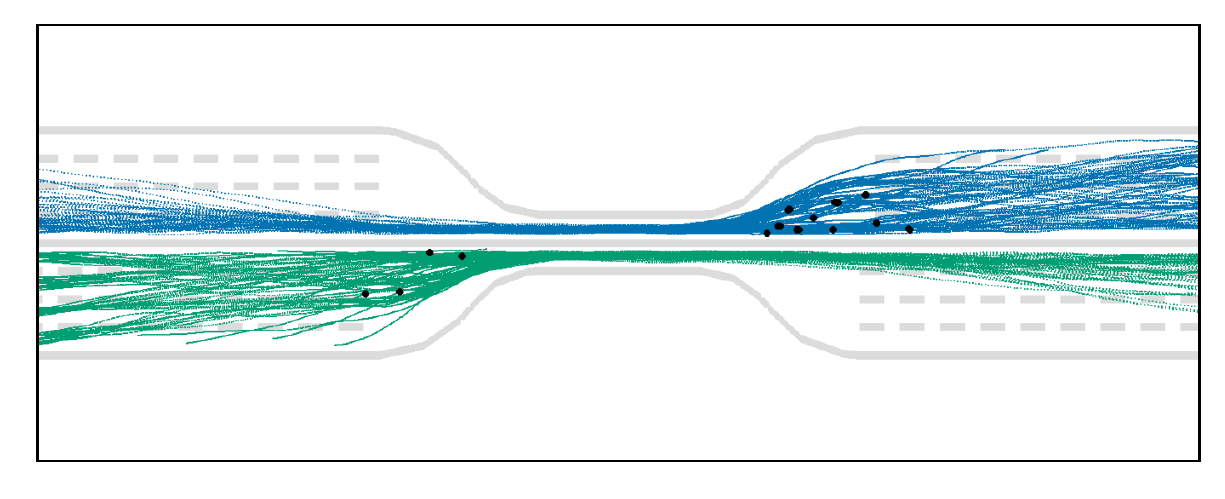}
                  \caption{MFPO}
     \end{subfigure}
          ~\\
     \begin{subfigure}{0.45\linewidth}
         \centering
         \includegraphics[width=\linewidth]{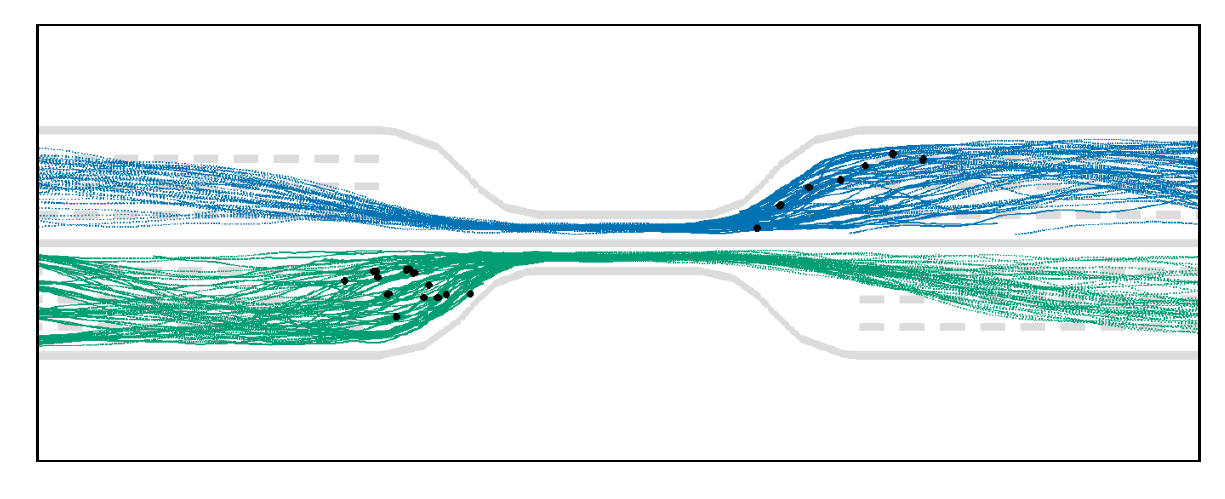}
                  \caption{CL}
     \end{subfigure}
          \begin{subfigure}{0.45\linewidth}
         \centering
         \includegraphics[width=\linewidth]{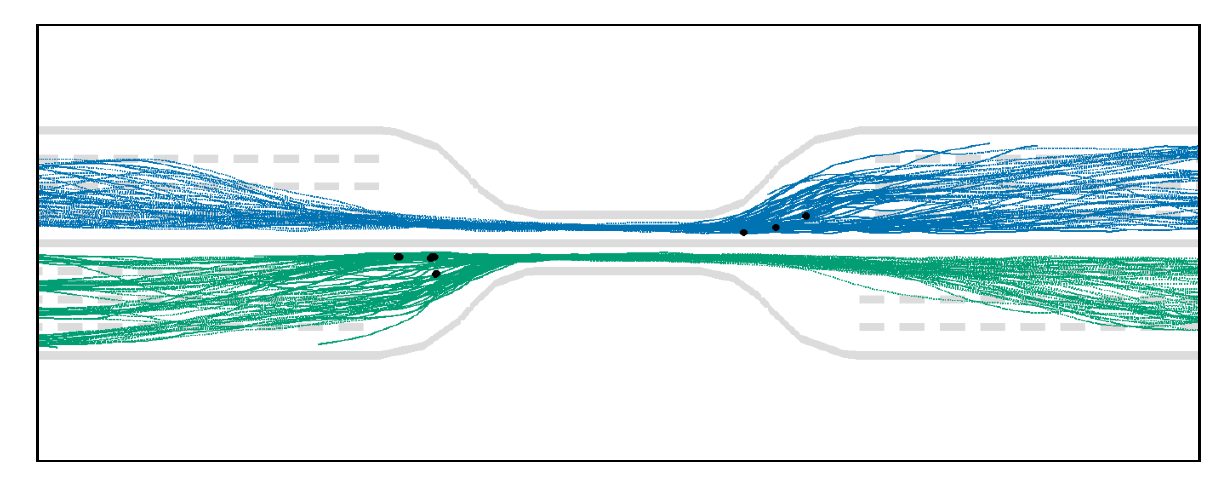}
         \caption{CoPO (Ours)}
     \end{subfigure}
        \caption{
        Plot of trajectories of trained populations in the Bottleneck environment.
        }
\end{figure}

\begin{figure}[H]
     \centering
     \begin{subfigure}{0.45\linewidth}
         \centering
         \includegraphics[width=\linewidth]{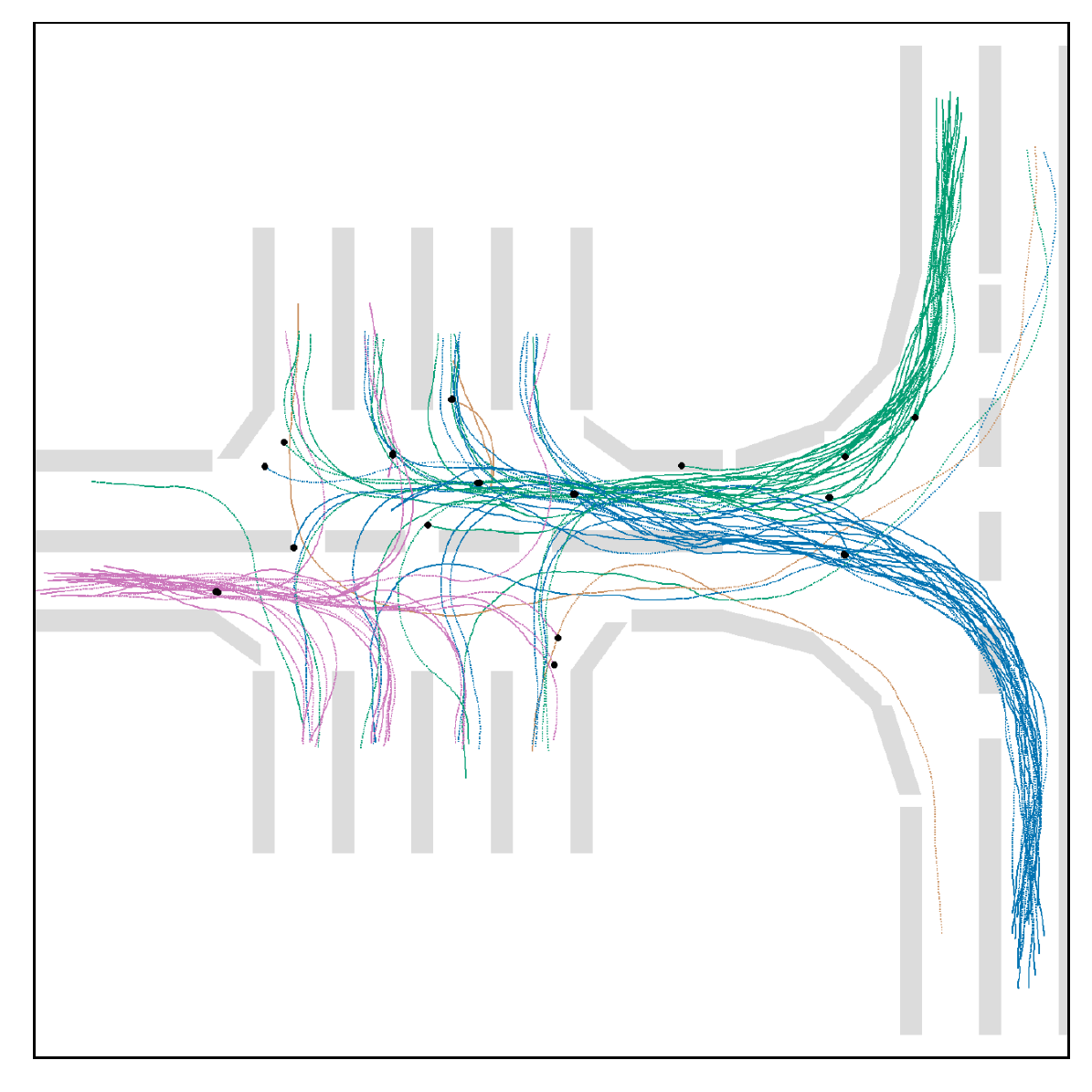}
                  \caption{IPO}
     \end{subfigure}
     \begin{subfigure}{0.45\linewidth}
         \centering
         \includegraphics[width=\linewidth]{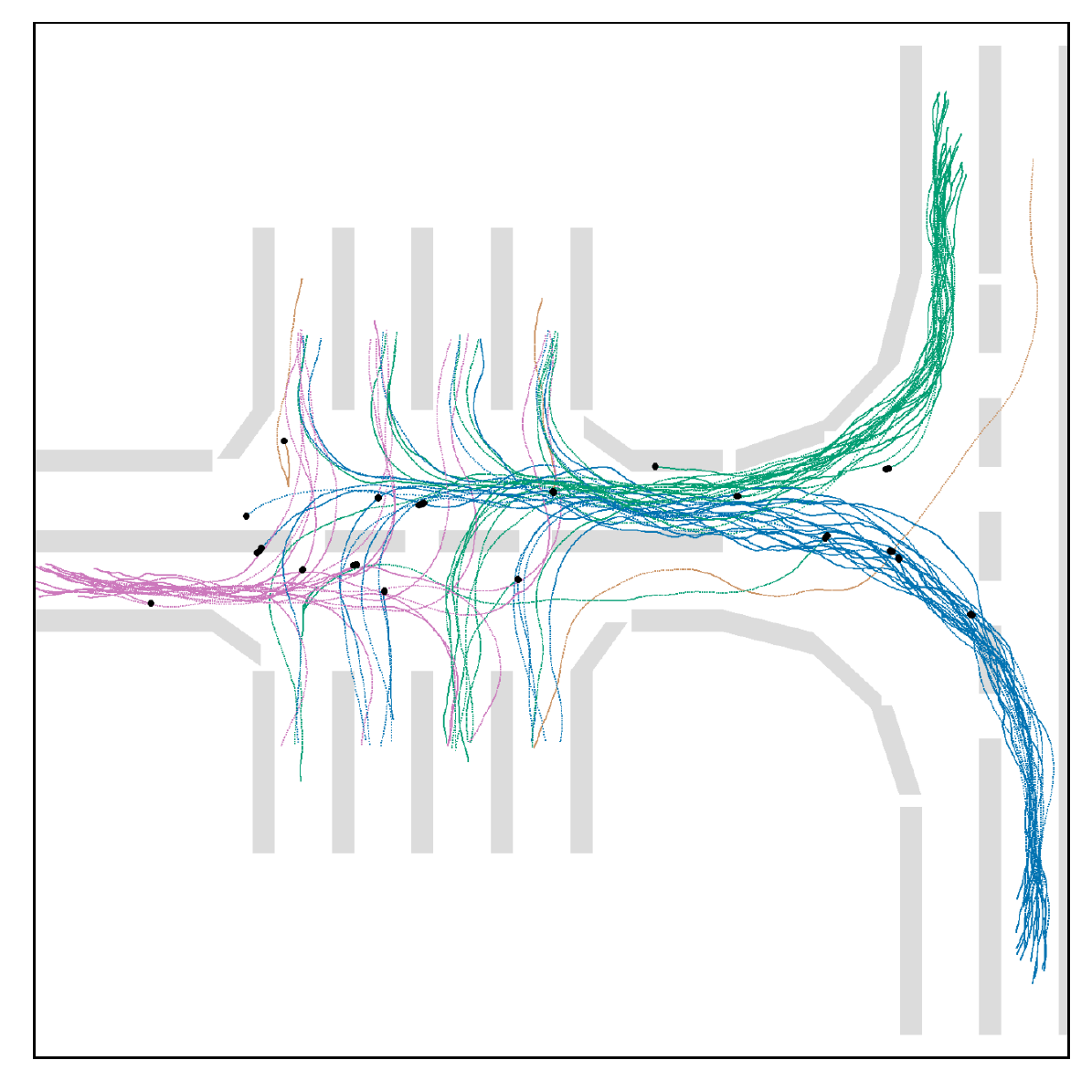}
                  \caption{MFPO}
     \end{subfigure}
          ~\\
     \begin{subfigure}{0.45\linewidth}
         \centering
         \includegraphics[width=\linewidth]{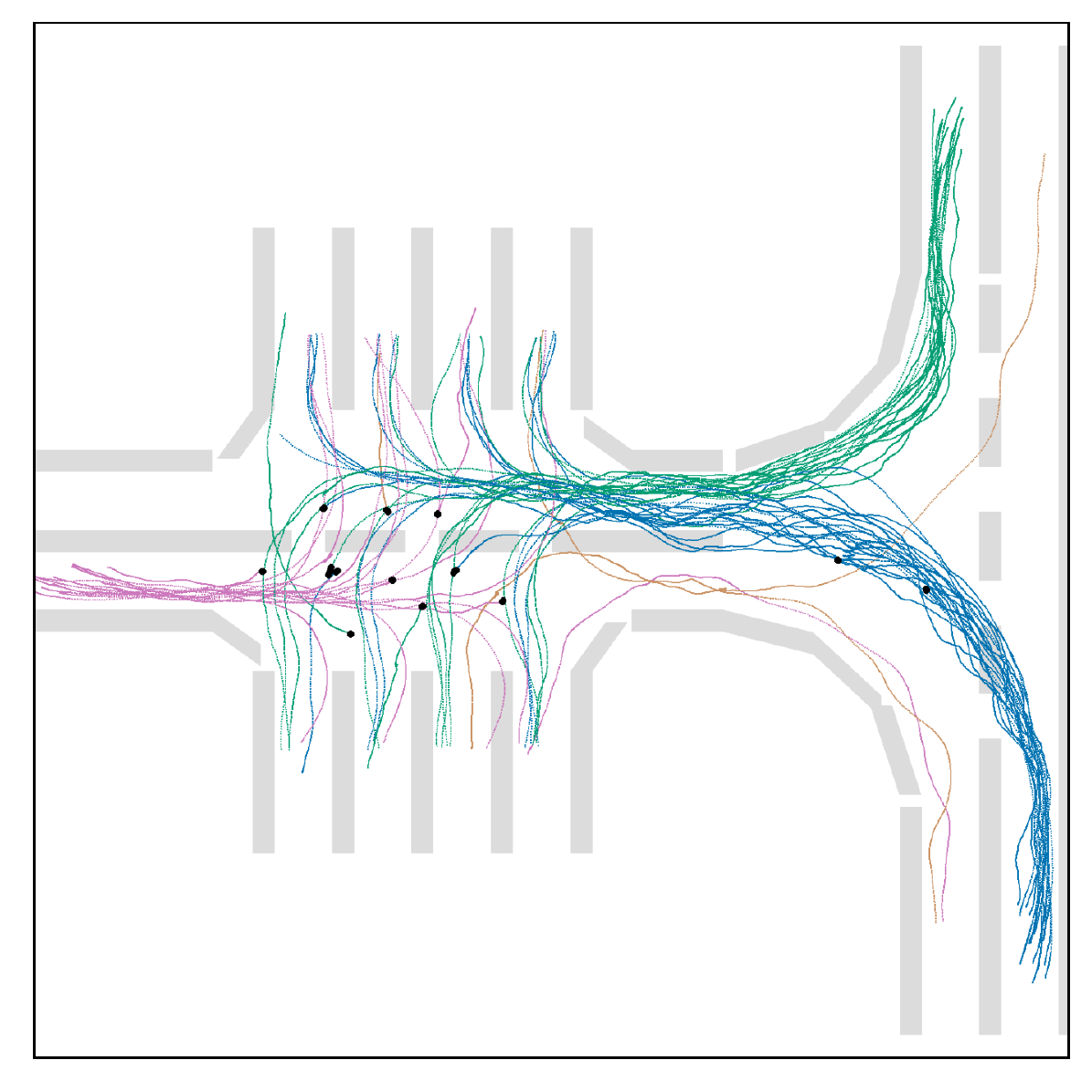}
                  \caption{CL}
     \end{subfigure}
          \begin{subfigure}{0.45\linewidth}
         \centering
         \includegraphics[width=\linewidth]{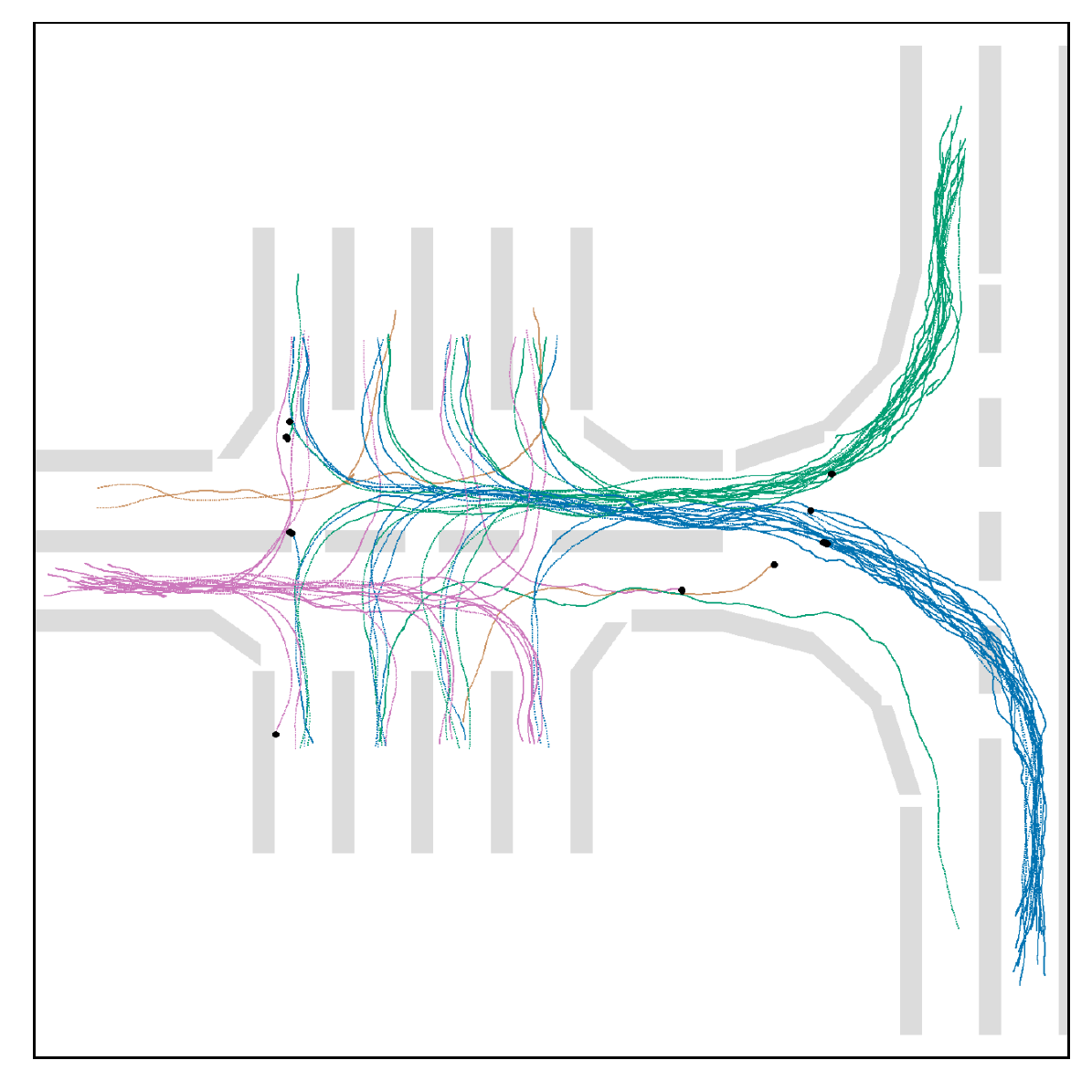}
         \caption{CoPO (Ours)}
     \end{subfigure}
        \caption{
        Plot of trajectories of trained populations in the Parking environment.
        }
\end{figure}

\end{document}